\documentclass[twoside]{article}

\usepackage[preprint]{aistats2026}
\usepackage{algorithm}
\usepackage{physics}
\usepackage{amsmath}
\usepackage{tikz}
\usepackage{mathdots}
\usepackage{yhmath}
\usepackage{cancel}
\usepackage{color}
\usepackage{siunitx}
\usepackage{array}
\usepackage{multirow}
\usepackage{amssymb}
\usepackage{tabularx}
\usepackage{extarrows}
\usepackage{booktabs}
\usepackage{amsthm}
\usepackage{breqn}
\usepackage{float}
\usepackage{subcaption}
\usepackage{amsfonts}
\usepackage{amssymb}
\usepackage{graphicx}
\allowdisplaybreaks
\usepackage[round]{natbib}
\usepackage{hyperref}
\hypersetup{
    colorlinks=true,
    linkcolor=blue,  % You can choose the color you prefer for links
    citecolor=blue,  % Color for citation links
    urlcolor=blue    % Color for URLs
}
\renewcommand{\leq}{\leqslant}
\renewcommand{\geq}{\geqslant}
\newtheorem{thm}{Theorem}
\newtheorem{assumption}{Assumption}
\newtheorem{lema}{Lemma}
\newtheorem{corollary}{Corollary}

\usepackage{pifont}
\usepackage{algcompatible}

\begin{document}

% If your paper is accepted and the title of your paper is very long,
% the style will print as headings an error message. Use the following
% command to supply a shorter title of your paper so that it can be
% used as headings.
%
\runningtitle{A Linear Matching Bandit Approach to Online Human-Robot Matching}

% If your paper is accepted and the number of authors is large, the
% style will print as headings an error message. Use the following
% command to supply a shorter version of the author names so that
% they can be used as headings (for example, use only the surnames)
%
%\runningauthor{Surname 1, Surname 2, Surname 3, ...., Surname n}

\twocolumn[

\aistatstitle{A Linear Matching Bandit Approach to \\ Online Multi-Human Multi-Robot Matching}

\aistatsauthor{ Yaohui Guo \And X. Jessie Yang \And  Cong Shi }

\aistatsaddress{ University of Michigan \\ \{yaohuig, xijyang, shicong\}@umich.edu  } ]

\begin{abstract}
We address the problem of online multi-human multi-robot matching through the lens of a linear matching bandit framework, where a learner assigns robots with unknown features from a fixed pool to distinct sets of human agents over multiple rounds. To solve this problem, we propose {\tt LinMatch}, an online learning algorithm that updates the confidence intervals of the unknown features and makes the optimistic matching under uncertainty. The contributions and novelty of this work are twofold. First, we recast the optimistic matching problem in each round as a linear program of maximum weighted matching, efficiently solvable by the celebrated Hungarian algorithm. Second, we provide novel bounds for matching with linear feature problems, showing that $\Theta(\sqrt{T})$ is the optimal achievable regret with respect to the total number of rounds $T$. The proposed algorithm and bounds apply to a wide range of matching problems with applications beyond human-robot matching, such as housing allocation, recommendation systems, and more.
\end{abstract}

\section{Introduction}
The adoption of multiple-human multiple-robot teams is rising in areas that demand sophisticated coordination and adaptability, such as military operations~\citep{ramchurn2015study, freedy2008multiagent}, factory automation~\citep{gombolay2015coordination, liu2021coordinating}, and automated agriculture systems~\citep{lippi2023optimal, JiRSS22}. In these domains, efficient matching between robots and human agents is crucial for maximizing the team performance in a variety of tasks, including task allocation~\citep{Wang2023}, operators allocation~\citep{dahiya2022scalable, JiRSS22}, matching and routing~\citep{Fu22}, and robot dispatch~\citep{lippi2023optimal}. Two detailed examples are provided in the Appendix.

% Many systems today consist of multiple humans and multiple robots working together \citep{freedy2008multiagent, ramchurn2015study, liu2021coordinating, lippi2023optimal, JiRSS22, Wang2023, dahiya2022scalable}.  Consider the following motivating scenario:

% \textbf{Scenario:} Teleoperation enhances safety in self-driving cars by enabling remote human operators to handle "edge cases" such as severe weather or unclear lane markings. At any given time, multiple vehicles might need help, and a pool of teleoperators are ready to assist. Each teleoperator offers distinct skills and knowledge, tailored to specific car brands or geographic areas where assistance is needed. Similarly, each vehicle may face unique challenges determined by its specific location, operational context, or technical issues. \textbf{Human-robot online teaming:} In environments where multiple cars and teleoperators are present, effectively managing the dynamic teaming/pairing between these teleoperators and vehicles becomes essential. This involves continuously assessing the unique skills of each teleoperator and the specific needs and capabilities of each vehicle, as well as making optimal teaming/pairing decisions. This strategic matching is crucial for maximizing efficiency and enhancing safety, allowing for quicker, more accurate task completion and a significant reduction in errors, thus improving overall operational efficiency.

\begin{figure}[t]
    \centering
    \includegraphics[width = 0.9\columnwidth]{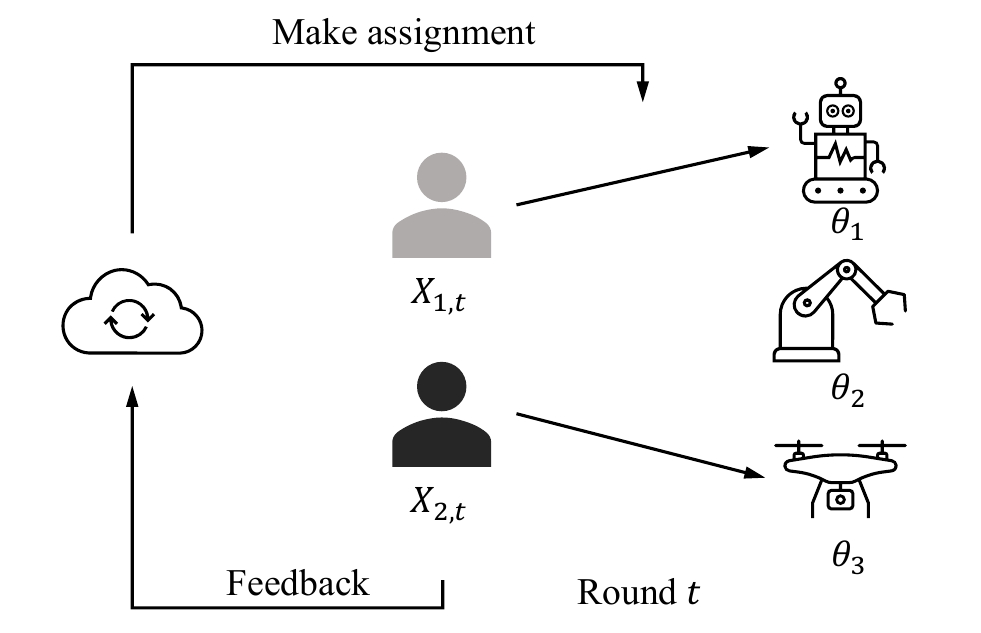}
    \caption{\footnotesize In online multi-human multi-robot matching, a fixed pool of $K$ robots are going to match with human agents for several rounds. In each round, $M$ human agents come, and a learner assigns each human a robot to complete a task. The human agents can vary from round to round. Each human-robot team will produce a reward, which can be observed by the learner. The learner's goal is to maximize the total rewards by making the optimal matching in each round. We assume the expected outcome of each human-robot team is determined by the inner product of their features. The feature of each human agent is known, potentially learned from the human's capability or task context, but the robots' features are unknown due to the complexity of AI systems.
    % To reduce the complexity of the problem, we assume a linear reward structure, i.e., the expected outcome of each human-robot team is determined by the inner product of their features. We assume the feature of each human agent is known, potentially learned from the human's capability or task context; while we assume the robots' features are unknown. The human    
    }
    \vspace{-5mm}
    \label{fig:motivation}
\end{figure}

The challenges in efficient matching arise from the potentially unknown and heterogeneous characteristics of the agents. As a consequence, an ideal matching algorithm will uncover these characteristics and make the optimal matching accordingly. In this work, we investigate the multi-human-multi-robot matching problem within the bandit framework for its effectiveness in online learning problems. As illustrated in Figure \ref{fig:motivation}, we consider the problem that a fixed pool of robots team up with varying human agents across multiple rounds. At each round, our learner allocates robots to human agents to maximize the total reward, thus enhancing the overall system performance. In such a framework, the learner faces two major challenges. Firstly, the varying human features and diverse robot capabilities limit the applicability of insights learned from previous rounds. If we treat each matching as a bandit arm, then the total number of arms grows exponentially with the number of humans and robots, making tracking the performance of each human-robot team intractable. Secondly, the inherent combinatorial complexity complicates the decision process, as rewards are interconnected by the constraints of the matching.

To address the first challenge, we assume a linear reward structure, a common approach in personalized problems like article recommendation~\citep{li2010contextual, li2011unbiased} and healthcare~\citep{tewari2017ads}. In particular, we assume that each robot is characterized by a feature vector $\theta \in \mathbb{R}^d$ such that the expected reward the robot produces when matching with a human agent with feature $X \in \mathbb{R}^d$ is $X^{\top}\theta$. In addition, we assume that the features of robots are \textit{unknown} due to the increasing complexity and lack of transparency in advanced robotic systems; while human features are assumed \textit{known}, which can be gleaned from user profiles or the context of the task at hand. Second, to manage the combinatorial complexity, we employ an upper-confidence-bound (UCB) method to balance exploration and exploitation. In particular, we propose the {\tt LinMatch} algorithm, which in each round estimates the confidence interval of the unknown parameters and makes an optimistic matching in the face of uncertainty. We demonstrate that the optimal matching problem in each round can be efficiently solved as a linear program using the Hungarian algorithm.

\smallskip
\noindent \textbf{Major Contributions.} Although previous literature has studied the matching bandit problem, they either have different mathematical formulations \citep[e.g.,][]{cen2022regret, liu2020competing, li2019online, li2022rate, erginbas2023interactive} or lack theoretical guarantees on regret \citep[e.g.,][]{le2006multi, ul2016efficient, hassan2014multi}. The main contributions are summarized as follows.

\begin{itemize}
\item {\bf A Novel Algorithm for Matching Bandits:} We propose \texttt{LinMatch}, an OFU (Optimism in the Face of Uncertainty) algorithm specifically designed for online linear matching bandits. The algorithm leverages ridge regression to estimate the feature vectors of workers and solves for optimistic matching. A central novelty of \texttt{LinMatch} lies in recasting the combinatorial mixed-integer matching problem as a linear program for maximum-weight matching, which can be efficiently solved using the Hungarian algorithm.
\item {\bf Theoretical Guarantees:} We provide rigorous theoretical results for \texttt{LinMatch}. Specifically, we establish an instance-independent upper bound on regret of $O(d\sqrt{MKT})$, where $d$ is the feature dimension, $M$ is the number of tasks, $K$ is the number of workers, and $T$ is the total number of rounds. Moreover, we prove a minimax lower bound of $\Omega(\sqrt{dKT})$, demonstrating that the $\sqrt{T}$ factor in our upper bound is tight. To the best of our knowledge, this is the first work to show that the instance-independent optimal achievable regret rate for matching problems with linear bandit structure is $\Theta(\sqrt{T})$.
\end{itemize}

Our results offer a deeper understanding of the fundamental limits of online matching bandit problems with linear features and a practical framework for achieving optimal performance in real-world applications such as dynamic task assignment and resource allocation.

\section{Related Work}

Several studies have investigated the matching problem in multi-human multi-robot teams~\cite{Fu22, lippi2023optimal, dahiya2022scalable}. However, our mathematical formulation of matching bandits with linear features is novel, which is most relevant to bandits literature, including linear bandits, ranking bandits, and centralized matching markets, which are summarized in Table~\ref{table:relatedwork}.

\textbf{Linear Bandits.} The linear reward structure in our problem has been extensively studied in linear bandits, which were first introduced by \cite{abe1999associative}, and later studied in both cases of the finite number of arms \citep{auer2002using, chu2011contextual} and the infinite number of arms \citep{rusmevichientong2010linearly, abbasi2011improved}. 
% Our framework gives a new and explicit formulation involving the multi-human multi-robot matching problem, which is a cornerstone problem in human-robot interactions.

\renewcommand{\arraystretch}{1.6}
\begin{table*}[t]
\centering
% Number of workers K <-> number of items L
% number of tasks M <-> number of positions K
\caption{This table summarizes the relevant literature, including linear bandits, ranking bandits, and matching bandits. Note that $M$ is the number of humans (agents), $K$ is the number of robots (arms), $T$ is the number of rounds, and $d$ is the dimension of the feature vectors. The term $\Delta$ is the minimal gap between the expected optimal matching reward and suboptimal matching reward (player-wise in matching markets literature). * indicates stable regret for each player.}
% \vspace{2mm}
\scalebox{0.9}{
\begin{tabularx}{1.1\textwidth}{p{6.5cm} p{8cm} p{4cm}} 
\toprule 
{\bf Related Work} & {\bf Setting} & {\bf Regret} \\ \hline

\cite{abbasi2011improved} & Linear bandits with unit ball action set & $O(d\log (T) \sqrt{T})$ \\ \hline

\cite{auer2002using, chu2011contextual} & Linear bandits with finite arms & $O(\sqrt{Td\log^3(TK)})$ \\ \hline

\cite{lattimore2018toprank} & Ranking bandits with general click model & $O\left(MK\frac{\log(T)}{\Delta}\right)$ \\ \hline

\cite{li2019online} & Ranking bandits with linear general click model &  $O\left(Md\sqrt{T\log(KT)}\right)$ \\ \hline

\cite{liu2020competing} & Centralized two-sided matching market & $O(MK \frac{\log T}{\Delta^2})$ *\\ \hline

\cite{liu2021bandit} & Decentralized two-sided matching market &  $O(M^5K^2 \frac{(\log T)^2}{\Delta^2})$  *\\ \hline

\cite{li2022rate} & Two-sided matching with linear features & $O(2d^2 \frac{\log (\Delta^2 MKT/d)}{\Delta^2})$  *\\ \hline

\cite{erginbas2023interactive} & Centralized matching with linear features, warm start & $O(M\sqrt{KdT})$ \\ \hline

{\bf Ours} & Centralized matching with linear features & $O(d\sqrt{MKT)}$ \\
\bottomrule
\end{tabularx}
}
% \vspace{5mm}
\label{table:relatedwork}
\end{table*}

\textbf{Ranking Bandits.} The online matching problem can be viewed as choosing a maximum weighted matching in a bipartite graph $G:=[ M] \amalg [ K]$. Such a structure is shared in ranking bandits, where the learner chooses an ordered subset of $M$ items from a larger set of $K$ items to present to a user to maximize clicks. The probability of the user clicking the presented items is modeled by a click model to simplify the statistical complexity. Previous research has studied various click models, such as the cascade model \citep{kveton2015cascading} and the dependent-click model \citep{katariya2016dcm}. Follow-up studies decomposed the click model as a product of the examination probability and attractiveness function. Notably, some works assume that the attractiveness function can be represented as a linear function of an item's features, thereby offering a scalable solution for large item sets \citep{li2016contextual, liu2018contextual, li2019online}.

\textbf{Centralized Matching Market.} Another highly related line of research is centralized matching markets, where a central platform tries to match agents from two sides with each other. Matching markets have many practical applications such as college admission \citep{gale1962college}, labor markets \citep{roth1984evolution}, and kidney exchange \citep{roth2005pairwise}. However, only recently, the matching markets model in bandit frameworks was introduced by \cite{liu2020competing}, and later studied in varying settings \citep{jagadeesan2021learning, cen2022regret, erginbas2023interactive}. \citet{li2022rate} adopted a similar setting as ours, where the matching score is the inner product of the features of the two matched agents and the features change round to round. However, while matching markets aim to maximize stable regret by considering the preferences of both sides, our problem focuses on maximizing the total rewards from the matching. Additionally, \citet{erginbas2023interactive} investigated a centralized joint matching and pricing problem, requiring an initial estimation of the mean reward matrix from historical data.

\section{Problem Setting}\label{sec:setting}

We consider a multi-human multi-robot team with $K$ robots and $M$ human agents working for $T$ rounds. At round $t$, $M$ human agents arrive, each characterized by a feature vector $X_{m,t} \in \mathbb{R}^{d}$. The learner observes the entire set of features $\mathcal{X}_{t} :=\{X_{m,t}\}_{m=1}^{M}$ of human agents, as well as the interaction history $\mathcal{H}_{t-1} =(\mathcal{X}_{1},\mathnormal{A}_{1} ,Y_{1} ,\dotsc ,\mathcal{X}_{t-1} ,A_{t-1} ,Y_{t-1})$, then chooses action $A_{t} \in \mathcal{A}_{M,K}$ according to its policy $\pi $, where the action set $\mathcal{A}_{M,K}$ is the collection of all injection functions from $[ M]$ to $[ K]$. This action $A_{t}$ assigns each human $m$ a robot $A_{t}( m)$. To ensure that each human is assigned a robot, we impose the condition that $M\leqslant K$. Upon completing the task, the learner receives a reward $Y_{m,t}$ from team $m$, for all $m\in [ M]$. We denote the entire reward vector for round $t$ as $Y_{t} =( Y_{1,t} ,\dotsc ,Y_{M,t})$. A summary of major notation is presented in the Appendix.

We assume linear realizability for the expected rewards: for all $k\in [ K]$, $m\in [ M]$, and $t\in [ T]$ such that $A_{t}( m) =k$, there exists some unknown vector $\theta _{k}^{*} \in \Theta \subseteq \mathbb{R}^{d}$ of robot $k$ such that $Y_{m,t} =X_{m,t}^{\top } \theta _{k}^{*} +\epsilon _{m,t}$, where $\epsilon _{m,t}$ is a $\sigma $-sub-Gaussian random noise. This sub-Gaussian noise assumption is standard in bandit literature \citep{lattimore2020bandit, li2019online, li2022rate}. Furthermore, we assume the noise terms $\{\epsilon _{m,t}\}_{m=1}^{M}$ are independent of each other given the features $\mathcal{X}_{t}$, matching $A_{t}$, and history $\mathcal{H}_{t-1}$. The optimal matching for round $t$ is given by 
\begin{equation}
\begin{aligned}
A_{t}^{*} = & \arg\max_{A\in \mathcal{A}_{M,K}}\mathbb{E}\left[\sum _{m=1}^{M} Y_{m,t}\right]\\
= & \arg\max_{A\in \mathcal{A}_{M,K}}\sum _{m=1}^{M} X_{m,t}^{\top } \theta _{A( m)}^{*} .
\end{aligned}
\end{equation}
% \begin{equation}
% $A_{t}^{*} = \arg\max_{A\in \mathcal{A}_{M,K}}\mathbb{E}\left[\sum _{m=1}^{M} Y_{m,t}\right]
% = \arg\max_{A\in \mathcal{A}_{M,K}}\sum _{m=1}^{M} X_{m,t}^{\top } \theta _{A( m)}^{*}$.
% \end{equation}
Hence, we define the regret $R_{T}\left( \pi ,\{X_{t}\}_{t=1}^{T}\right)$ for policy $\pi $ given the human agents' features as 
\begin{equation}
% = \sum _{t=1}^{T}\max_{A\in \mathcal{A}_{M,K}}\sum _{m=1}^{M} X_{m,t}^{\top } \theta _{A( m)}^{*} -\mathbb{E}_{\pi }\left[\sum _{t=1}^{T}\sum _{m=1}^{M} Y_{m,t}\right]
\sum _{t=1}^{T}\sum _{m=1}^{M} X_{m,t}^{\top } \theta _{A_{t}^{*}( m)}^{*} -\mathbb{E}_{\pi }\left[\sum _{t=1}^{T}\sum _{m=1}^{M} X_{m,t}^{\top } \theta _{A_{t}( m)}^{*}\right]
\end{equation}
% where $A_{t}^{*} =\arg\max_{A_{t} \in \mathcal{A}_{M,K}}\sum _{m=1}^{M} X_{m,t}^{\top } \theta _{A_{t}( m)}^{*}$ is the optimal matching at round $t$.

We note that our problem formulation accommodates adversarial matchings. Specifically, the feature set $\{\mathcal{X}_{t}\}_{t=1}^{T}$ may be selected and fixed by an oblivious adversary prior to the first round. In addition, we make the following standard assumption on the magnitude of the feature vectors:
\begin{assumption}\label{assum:norm}
$\left\Vert \theta _{k}^{*}\right\Vert \leqslant S$ for all $k\in [ K]$, $\Vert X_{m,t}\Vert \leqslant L$ for all $( m,t) \in [ M] \times [ T]$.
\end{assumption}

\section{LinMatch Algorithm}
We introduce the {\tt LinMatch} algorithm, outlined in Algorithm~\ref{alg:lin}. At each round, {\tt LinMatch} first computes the confidence interval of each parameter $\theta _{k}^{*}$ for all $k\in [ K]$, and then finds the optimistic matching in the face of uncertainty. The core idea of {\tt LinMatch} is that we can formulate the optimistic matching problem as a maximum weighted matching, which can be efficiently solved by the Hungarian algorithm.

\begin{algorithm}[tb]
\caption{{\tt LinMatch}}
\label{alg:lin}
\begin{algorithmic}[1]
\REQUIRE $K,M,d, S, L\in \mathbb{N}$
\STATE $V_{k,0}\leftarrow \lambda I$, $h_{k,0}\leftarrow \mathbf{0}$, for all $k\in [ K]$
\FOR{$t=1,2,...,T$}
    \STATE \hspace{-3mm} \textit{Step 1. Update confidence interval $\hat{\Theta }_{k,t}$'s:}
    \FOR{$k=1,2,...,K$}
        \STATE {\small $\hat{\theta }_{k,t}\leftarrow V_{k,t}^{-1} H_{k,t}$} 
        % \STATE $\rho _{k,t} \leftarrow \sigma \sqrt{2\log\frac{K}{\delta } +d\log(1+\frac{| \Psi _{k,t}| L^{2}}{d\lambda })} +\sqrt{\lambda } S$
        \STATE {\small $\rho_{k,t} \leftarrow \sigma \sqrt{2\log\frac{K}{\delta} + d\log\left(1+\frac{|\Psi_{k,t}| L^{2}}{d\lambda}\right)} + \sqrt{\lambda} S$}
    \ENDFOR
    \STATE \hspace{-3mm} \textit{Step 2. Observe features $X_{m,t} \in \mathbb{R}^{d}$, $\forall m \in [M]$}
    \STATE \hspace{-3mm} \textit{Step 3. Compute optimistic matching:}
    \FOR{$( m,k) \in [ M] \times [ K]$}
        \STATE $\tilde{\theta }_{mk,t}\leftarrow \arg\max_{\theta _{k} \in \hat{\Theta }_{k,t}} X_{m}^{\top } \theta _{k}$
        \STATE $\tilde{y}_{mk,t}\leftarrow X_{m,t}^{\top }\tilde{\theta }_{mk,t}$
    \ENDFOR
    \STATE $A_{t} \leftarrow \arg\max_{A\in \mathcal{A}_{M,K}}\sum \tilde{y}_{mA( m) ,t}$.
    \STATE \hspace{-3mm} \textit{Step 4. Observe rewards $y_{1,t} ,y_{2,t} ,...,y_{M,t}$}
    \STATE \hspace{-3mm} \textit{Step 5. Update parameters:}
    \FOR{$k\in A_{t}([ M])$}
        \STATE $V_{k,t}\leftarrow \sum _{s\in \Psi _{k,t}} x_{k,s} x_{k,s}^{\top } +\lambda I$ 
        \STATE $H_{k,t} \leftarrow \sum _{s\in \Psi _{k,t}} y_{k,s} x_{k,s}$
        % \STATE update $\hat{\Theta }_{k,t}$
    \ENDFOR
\ENDFOR
\end{algorithmic}
\end{algorithm}

\subsection{Constructing Confidence Intervals}
We use ridge regression to construct confidence intervals of $\theta _{k}^{*}$'s. Since the number of robots is greater than the number of humans, a robot may not be assigned to a human in some rounds. Therefore, for each robot, we need to keep track of the rounds where it was assigned to a human. For each $k$, at round $t$, let $\Psi _{k,t} =\left\{s\middle| s\in [ t-1] \ \text{and} \ k\in A_{s}([ M])\right\}$ be the set of rounds \textit{before} $t$ when robot $k$ was assigned to a human, $k_{s} :=A_{s}^{-1}( k)$ be the assigned human for $s\in \Psi _{k,t}$, where $A_{s}^{-1}$ is the inverse of $A_s$, and $( Y_{k_{s} ,s})_{s\in \Psi _{k,t}}$ and $( X_{k_{s} ,s})_{s\in \Psi _{k,t}}$ be the sets of rewards and human features. Using ridge regression, we obtain the estimated value of $\theta _{k}^{*}$ at the beginning of round $t$ as $\hat{\theta }_{k,t} =V_{k,t}^{-1} H_{k,t}$, where
\begin{equation}\label{eq:VH_defs}
V_{k,t} =\sum _{s\in \Psi _{k,t}} X_{k_{s} ,s} X_{k_{s} ,s}^{\top } +\lambda I , 
\quad
H_{k,t} =\sum _{s\in \Psi _{k,t}} Y_{k_{s} ,s} X_{k_{s} ,s} ,
\end{equation}
and $\lambda $ is a positive regularization coefficient. The confidence interval of $\theta _{k}^{*}$ at round $t$ is defined as 
\begin{equation}\label{eq:CI_def}
\hat{\Theta }_{k,t} :=\{\Vert \theta -\hat{\theta }_{k,t}\Vert _{V_{k,t}} \leqslant \rho _{k,t}\} ,
\end{equation}
where $\rho _{k,t} =\sigma \sqrt{2\log\frac{K}{\delta } +d\log\left( 1+\frac{| \Psi _{k,t}| L^{2}}{d\lambda }\right)} +\sqrt{\lambda } S$. Here the radius $\rho _{k,t}$ is designed to let the true parameter $\theta^*_k$ fall in $\hat{\Theta }_{k,t}$ with high probability (see Theorem~\ref{thm:CI_pro}).

\subsection{Optimistic Matching in the Face of Uncertainty}
After obtaining the ellipsoids $\hat{\Theta }_{k,t}$'s, {\tt LinMatch} computes the optimistic matching as 
\begin{equation}\label{eq:At_estimation}
A_{t} =\arg\max_{A\in \mathcal{A}_{M,K}}\sum _{m=1}^{M}\max_{\theta \in \hat{\Theta }_{A( m) ,t}} \theta ^{\top } X_{m,t} .
\end{equation}
This is equivalent to solving the following mixed integer program
\begin{equation*}
\begin{aligned}
\max \  & \sum _{m=1}^{M}\sum _{k=1}^{K} X_{m,t}^{\top } \theta _{k} 1_{\{A( m) =k\}}\\
\text{s.t. } & \theta _{k} \in \hat{\Theta }_{k,t} ,\ \text{for all } k\in [ K] ,\\
 & A\in \mathcal{A}_{M,K} ,
\end{aligned}
\end{equation*}
which can be rewritten using the incidence matrix notation \citep{conforti2014integer}:
\begin{equation}\label{eq:LP1}
\begin{aligned}
\max \quad  & c_{t}^{\prime }( e,\{\theta_k\}_{k=1}^K ) :=\sum _{m=1}^{M}\sum _{k=1}^{K} X_{m,t}^{\top } \theta _{k} e_{mk}\\
\text{s.t.} \quad  & I_{G} e\leqslant \mathbf{1} ,\\
 & e\in \{ 0,1\}^{MK} ,\\
 & \theta _{k} \in \hat{\Theta }_{k,t} ,\ \text{for all } k\in [ K] ,
\end{aligned} \tag{LP1}
\end{equation}
where $I_{G}$ is the incidence matrix of the bipartite graph $G:=[ M] \amalg [ K]$. We can view \eqref{eq:LP1} as a integer program parametrized by $\{\theta_k\}_{k=1}^K$. 
% (cite integer programming book theorem, see \ p145, Conforti)

To solve \eqref{eq:LP1}, we need to select $\theta _{k}$ from $\hat{\Theta }_{k,t}$ to maximize $c_{t}^{\prime }( e,\theta )$ for each matching $e$. We notice that $\sum _{m=1}^{M} e_{mk} \in \{0,1\}$, for all $k\in [ K]$, which implies that each $\theta _{k}$ will appear at most one non-zero summand in $c_{t}^{\prime }( e,\theta )$, and thus the cost function $c_{t}^{\prime }( e,\theta )$ can be replaced with 
\begin{equation*}
c_{t}( e) :=\sum _{m=1}^{M}\sum _{k=1}^{K}\max_{\theta _{k} \in \hat{\Theta }_{k,t}}\left( X_{m,t}^{\top } \theta _{k}\right) e_{mk}
\end{equation*}
without altering the optimal value and solution of \eqref{eq:LP1}. Therefore, by defining 
\begin{equation}\label{eq:theta_tilda}
\tilde{\theta }_{mk,t} =\arg\max_{\theta _{k} \in \hat{\Theta }_{k,t}} X_{m,t}^{\top } \theta _{k} \quad \text{and} \quad \tilde{y}_{mk,t} =X_{m,t}^{\top }\tilde{\theta }_{mk,t} ,
\end{equation}
we can convert \eqref{eq:LP1} to the following standard matching program
\begin{equation}
\begin{aligned}
\max \quad  & c_{t}( e) :=\sum _{m=1}^{M}\sum _{k=1}^{K}\tilde{y}_{mk,t} e_{mk}\\
\text{s.t.} \quad  & I_{G} e\leqslant \mathbf{1}\\
 & e\in \{0,1\}^{MK} ,
\end{aligned} \tag{LP2}
\end{equation}
which can be efficiently solved by the Hungarian algorithm in $O\left( K^{3}\right)$ \citep{conforti2014integer}.

%%%%%%%%%%%%%%%%%%%%%%%%%%%%%%%%%
\section{Theoretical Results}\label{sec:thm}
%%%%%%%%%%%%%%%%%%%%%%%%%%%%%%%%%
We analyze the {\tt LinMatch} algorithm and show that its regret is upper bounded by $O(d \sqrt{MKT})$. In addition, we provide a worst-case lower bound of $\Omega(\sqrt{dKT})$.
% , matching the upper bound with the $\sqrt{T}$ factor.

\subsection{Regret Upper Bound}\label{sec:upper_bound_regret}
The following result gives an upper bound on the regret of the {\tt LinMatch} algorithm.

% \begin{thm}\label{thm:upper}
% Assume $\left\Vert \theta _{k}^{*}\right\Vert \leqslant S$ for all $k\in [ K]$ and $\Vert X_{m,t}\Vert \leqslant L$ for all $( m,t) \in [ M] \times [ T]$. For $\delta $ and $\lambda $ that satisfy 
% \begin{equation*}
% % \begin{aligned}
% \delta \leqslant \frac{K}{\exp( d)} \ \text{ and} \quad
% \sqrt{\lambda } \geqslant \frac{\max_{m\in [M], k_{1} ,k_{2} \in [ K] ,s\in [ T]} X_{m,s}^{\top }\left( \theta _{k_{1}}^{*} -\theta _{k_{2}}^{*}\right)}{2S} ,
% % \end{aligned} \ \ 
% \end{equation*}
% with probability at least $1-\delta $, the regret of the {\tt LinMatch} algorithm satisfies, for all $t\in [ T]$,
% \begin{equation*}
% % \begin{aligned}
% R_{t} \leqslant  2\sqrt{2dtMK\log\left( 1+\frac{tML^{2}}{dK\lambda }\right)} 
% \times \left( \sigma \sqrt{2\log\frac{K}{\delta } +d\log\left( 1+\frac{tML^{2}}{dK\lambda }\right)} +\sqrt{\lambda } S\right).
% % \end{aligned}
% \end{equation*}
% \end{thm}

\begin{thm}\label{thm:upper}
Assume $\left\Vert \theta _{k}^{*}\right\Vert \leqslant S$ for all $k\in [ K]$ and $\Vert X_{m,t}\Vert \leqslant L$ for all $( m,t) \in [ M] \times [ T]$. For $\delta $ and $\lambda $ that satisfy 
\begin{equation}
\label{eq:delta_condition}
\delta \leqslant \frac{K}{\exp( d)}
\end{equation}
and
\begin{equation}
\label{eq:lambda_condition}
\sqrt{\lambda } \geqslant \frac{\max_{m\in [M], k_{1} ,k_{2} \in [ K] ,s\in [ T]} X_{m,s}^{\top }\left( \theta _{k_{1}}^{*} -\theta _{k_{2}}^{*}\right)}{2S} ,    
\end{equation}
with probability at least $1-\delta $, the regret of the {\tt LinMatch} algorithm satisfies, for all $t\in [ T]$,
\begin{equation*}
\begin{aligned}
R_{t} \leqslant  & 2\sqrt{2dtMK\log\left( 1+\frac{tML^{2}}{dK\lambda }\right)}  \times \\
& \left( \sigma \sqrt{2\log\frac{K}{\delta } +d\log\left( 1+\frac{tML^{2}}{dK\lambda }\right)} +\sqrt{\lambda } S\right).
\end{aligned}
\end{equation*}
\end{thm}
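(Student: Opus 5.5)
The plan is the standard optimism-plus-elliptical-potential argument, run separately for each robot. The key point is that in any round each robot is matched to at most one human, so each robot's design matrix $V_{k,t}$ receives at most one rank-one update per round.

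\textbf{Step 1 (good event).} For each $k$, apply the self-normalized martingale bound of Abbasi-Yadkori et al. at confidence $\delta/K$. This is the content of Theorem~\ref{thm:CI_pro}, and it applies to the subsequence of rounds $\Psi_{k,t}$ with regressors $X_{k_s,s}$. It gives $\theta_k^*\in\hat\Theta_{k,t}$ for all $t$. The conditional independence of the noises given $(\mathcal X_t, A_t,\mathcal H_{t-1})$ makes the per-robot noise sequence a martingale difference sequence with respect to the natural filtration. A union bound over $k\in[K]$ then gives the event $\mathcal E$, on which $\theta_k^*\in\hat\Theta_{k,t}$ for all $k,t$ simultaneously, with probability at least $1-\delta$. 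Everything below is on $\mathcal E$.

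\textbf{Step 2 (optimism and per-round regret).} On $\mathcal E$ we have $X_{m,t}^\top\theta^*_{k}\le \tilde y_{mk,t}$ for every pair $(m,k)$. Since $A_t$ maximizes $\sum_m\tilde y_{mA(m),t}$ (equivalence of \eqref{eq:LP1} and (LP2)),
\begin{equation*}
\sum_m X_{m,t}^\top\theta^*_{A_t^*(m)}\le\sum_m \tilde y_{mA_t^*(m),t}\le \sum_m\tilde y_{mA_t(m),t}.
\end{equation*}
Hence the instantaneous regret satisfies
\begin{equation*}
r_t\le \sum_m X_{m,t}^\top(\tilde\theta_{mA_t(m),t}-\theta^*_{A_t(m)}).
\end{equation*}
Both $\tilde\theta_{mA_t(m),t}$ and $\theta^*_{A_t(m)}$ lie in $\hat\Theta_{A_t(m),t}$, so Cauchy--Schwarz in the $V$-norm gives $r_t\le \sum_m 2\rho_{A_t(m),t}\|X_{m,t}\|_{V^{-1}_{A_t(m),t}}$. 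The trivial bound $r_t\le \sum_m \max_{k_1,k_2}X_{m,t}^\top(\theta^*_{k_1}-\theta^*_{k_2})$ is not affected by pairing, because it is a sum over humans. Condition \eqref{eq:lambda_condition} makes that maximal gap at most $2\sqrt\lambda S\le 2\rho_{k,t}$. Combining the two bounds, I want to reach
\begin{equation*}
r_t\le 2\rho\sum_m \min\bigl(1,\|X_{m,t}\|_{V^{-1}_{A_t(m),t}}\bigr),
\end{equation*}
where $\rho$ is a uniform upper bound on the radii. The per-term min needs care: the two bounds have to be combined pair by pair rather than on the whole sum. I would bound each summand $X_{m,t}^\top(\theta^*_{A_t^*(m)}-\theta^*_{A_t(m)})$ after adding and subtracting the optimistic values, or else accept a min over the whole sum and bound it by the sum of mins. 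Condition \eqref{eq:delta_condition} ensures $2\log(K/\delta)\ge 2d$, which should make the radius large enough for the absorption constants to come out cleanly.

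\textbf{Step 3 (summation).} Reindex the double sum over $(t,m)$ as a sum over robots $k$ and rounds $s\in\Psi_{k,t+1}$, and write $n_k=|\Psi_{k,t+1}|$, so that $\sum_k n_k=tM$. Cauchy--Schwarz gives
\begin{equation*}
R_t\le 2\rho\sqrt{tM\sum_k\sum_{s\in\Psi_{k}}\min\bigl(1,\|X_{k_s,s}\|^2_{V^{-1}_{k,s}}\bigr)}.
\end{equation*}
The elliptical potential lemma applies per robot, because each robot's matrix is updated by exactly one rank-one term per round in $\Psi_k$. It bounds the inner sum by $2d\log(1+n_kL^2/(d\lambda))$. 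By concavity of $\log$ (Jensen) together with $\sum_k n_k=tM$, summing over $k$ gives at most $2dK\log(1+tML^2/(dK\lambda))$. This yields the factor $2\sqrt{2dtMK\log(1+tML^2/(dK\lambda))}$.

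\textbf{Main obstacle.} I expect the hard part to be controlling the radius factor $\rho$ in the form stated. Naively $\rho_{k,t}$ involves $|\Psi_{k,t}|\le t$, not $tM/K$. I would try to keep $\rho_{k,t}$ inside the Cauchy--Schwarz step, that is, bound $\sum_k \rho_k^2\cdot(\text{potential}_k)$. Both factors are increasing in $n_k$, so a concavity/Jensen argument over the $n_k$ with $\sum_k n_k=tM$ should push the bound to the balanced case $n_k=tM/K$. Condition \eqref{eq:delta_condition} is presumably what makes the product of the square-root-log terms concave enough for that step. If this fails, the fallback is the slightly weaker bound with $\log(1+tL^2/(d\lambda))$ inside $\rho$.
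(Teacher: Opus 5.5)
\textbf{The gap: the per-human minimum in Step~2.} You flag this step but do not close it. The two bounds $r_s\le\sum_m a_m$ (optimism, \eqref{eq:rs2}) and $r_s\le\sum_m b_m$ (trivial, \eqref{eq:rs3}) do not imply $r_s\le\sum_m\min(a_m,b_m)$.

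Neither of your fixes works:
\begin{itemize}
\item Adding and subtracting optimistic values summand by summand fails. Optimism controls only the total value of the matching, so the terms $X_{m,s}^\top(\theta^*_{A_s^*(m)}-\tilde\theta_{mA_s(m),s})$ have no individual sign.
\item The valid form of your second fix is $\min(A,\sum_m b_m)\le\sum_m\min(A,b_m)$ with $A=2M\sqrt\lambda S$. It puts $\min(M,\cdot)$ in place of $\min(1,\cdot)$ and loses up to a factor $M$.
\end{itemize}

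Under \eqref{eq:lambda_condition} alone, the per-human inequality is in fact false. Take $d=S=1$ and $M=K=2$. Let robot~1 be unplayed, with $V_{1,s}=\lambda$, $\hat\theta_{1,s}=0$ and $\theta^*_1=-1$. Let robot~2 have $\theta^*_2=\hat\theta_{2,s}=0$, with $V_{2,s}$ so large that its interval is negligible. Let $X_{1,s}=2\sqrt\lambda$ and $X_{2,s}=-(2\sqrt\lambda-\epsilon)$. Then:
\begin{itemize}
\item the good event and \eqref{eq:lambda_condition} both hold;
\item optimism picks $1\mapsto1$, $2\mapsto2$, and $r_s=4\sqrt\lambda-\epsilon$;
\item yet $2\sum_m\rho_{A_s(m),s}\bigl(1\wedge\|X_{m,s}\|_{V^{-1}_{A_s(m),s}}\bigr)\approx2\rho_{1,s}=2\sqrt\lambda+2\sigma\sqrt{2\log(K/\delta)}$, which is smaller once $\sigma$ is small.
\end{itemize}
The paper's proof makes exactly this leap: it asserts the per-human form directly ``by \eqref{eq:rs2} and \eqref{eq:rs3}''. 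So it has the same hole, and \eqref{eq:delta_condition} plays no role in this step.

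There are two ways to close it:
\begin{itemize}
\item Assume $\lambda\ge L^2$, which implies \eqref{eq:lambda_condition}. Then $\|X_{m,s}\|_{V^{-1}_{k,s}}\le L/\sqrt\lambda\le1$, the minimum is never active, and \eqref{eq:rs2} alone feeds your Step~3 to give exactly the stated bound.
\item Keep only \eqref{eq:lambda_condition} and treat separately the rounds containing a pair with $\|X_{m,s}\|_{V^{-1}_{A_s(m),s}}>1$. Each such round costs at most $2M\sqrt\lambda S$. By Lemma~\ref{lema:elliptical_potential} and Jensen there are at most $2dK\log(1+tML^2/(dK\lambda))$ of them. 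You get the stated bound plus an additive $O(dMK\sqrt\lambda S\log(\cdot))$ term.
\end{itemize}

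\textbf{The rest of your plan.} Everything else follows the paper's route. Your resolution of the ``main obstacle'' is the paper's Lemma~\ref{lema:inequality_union_bound}. Write $\rho(n)=\sigma\sqrt{a+d\log(1+bn)}+\sqrt\lambda S$ with $a=2\log(K/\delta)$ and $b=L^2/(d\lambda)$. The paper applies Cauchy--Schwarz per robot and uses concavity of $n\mapsto\rho(n)\sqrt{2nd\log(1+bn)}$, which needs $a\ge2d$, i.e.\ \eqref{eq:delta_condition}.

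Your single global Cauchy--Schwarz also works, but it needs concavity of $F(n)=\rho(n)^2\log(1+bn)$. ``Both factors are increasing'' does not give that, so you need the computation. With $u=\log(1+bn)$ and $f(u)=\rho^2u$, one has $F''=(u')^2\bigl(f''(u)-f'(u)\bigr)$. The piece $\sigma^2(a+du)u$ requires exactly $a\ge2d$, and the other pieces need less. Jensen with $\sum_k n_k=tM$ then gives precisely the stated radius factor. Your fallback, using $|\Psi_{k,t}|\le t$ inside the radius, would not give the stated bound.
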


The proof can be found in the Appendix. The constraint on $\lambda$ can be replaced with a stronger condition that $\lambda\geq L^2$, which can be easier to check.

The difficulties of analyzing the upper bound of the {\tt LinMatch} algorithm arise from two aspects. First, the algorithm's future actions depend on the confidence intervals constructed using past actions, making the determination of the confidence set a complex task. Previous works in linear bandits have developed two approaches to solve this problem, either utilizing the theory of self-normalized processes \citep{pena2009self, abbasi2011improved} or expressing the estimated reward as a linear combination of rewards from arms that are independent of the current arm \citep{auer2002using, chu2011contextual}. In our study, we will use the theory of self-normalized processes to construct the confidence intervals of $\theta$'s, as it results in a simple form of algorithm. 

% \begin{figure}[t]
%     \centering
%     \includegraphics[width = 1\columnwidth]{figs/regret_fig.pdf}
%     \caption{The regret of an optimistic matching in the face of uncertainty. Blue: optimistic matching; red: optimal matching; black: actual matching.
%     }
%     \label{fig:regret}
% \end{figure}

Second, the combinatorial nature of the matching problem makes it a complicated work to express the regret in terms of the confidence intervals. To illustrate, consider the matching problem with two robots and two human agents. The time index $t$ is omitted for clarity. Suppose we have constructed the confidence intervals $\hat{\Theta}_1$ and $\hat{\Theta}_2$ and calculated the optimistic estimations $\hat{\theta}_{11}$, $\hat{\theta}_{12}$, $\hat{\theta}_{21}$, and $\hat{\theta}_{22}$ based on \eqref{eq:theta_tilda}. Assume the optimistic matching is to assign human 1 ($X_1$) to robot 1 ($\hat{\theta}_1$) and human 2 ($X_2$) to robot 2 ($\hat{\theta}_2$), while the actual optimal matching is to assign human 1 to robot 2 and human 2 to robot 1. Then the expected regret is
\begin{equation}\label{eq:regret_direct}
\begin{aligned}
R= & \underbrace{\left( X_{1}^{\top } \theta _{2}^{*} +X_{2}^{\top } \theta _{1}^{*}\right)}_{\text{optimal reward}} -\underbrace{\left( X_{1}^{\top } \theta _{1}^{*} +X_{2}^{\top } \theta _{2}^{*}\right)}_{\text{actual reward}} \\
= & X_{1}^{\top }\left( \theta _{2}^{*} -\theta _{1}^{*}\right) +X_{2}^{\top }\left( \theta _{1}^{*} -\theta _{2}^{*}\right).
\end{aligned}
\end{equation}

% \begin{equation}\label{eq:regret_direct}
% R= \underbrace{\left( X_{1}^{\top } \theta _{2}^{*} +X_{2}^{\top } \theta _{1}^{*}\right)}_{\text{optimal reward}} -\underbrace{\left( X_{1}^{\top } \theta _{1}^{*} +X_{2}^{\top } \theta _{2}^{*}\right)}_{\text{actual reward}}
% = X_{1}^{\top }\left( \theta _{2}^{*} -\theta _{1}^{*}\right) +X_{2}^{\top }\left( \theta _{1}^{*} -\theta _{2}^{*}\right).
% \end{equation}
Ideally, we want to prove the righthand side of \eqref{eq:regret_direct} will decrease over time. However, the differences between the unknown $\theta^*$'s are constant. To proceed, we observe that, if every true value $\theta^*_k$ is within its confidence interval $\hat{\Theta}_k$, then we have
\begin{equation}
\begin{aligned}
R 
&= \underbrace{\left( X_{1}^{\top } \theta _{2}^{*} +X_{2}^{\top } \theta _{1}^{*}\right)}_{\text{optimal reward}} -\underbrace{\left( X_{1}^{\top } \theta _{1}^{*} +X_{2}^{\top } \theta _{2}^{*}\right)}_{\text{actual reward}}\\
&\leqslant  \underbrace{\left( X_{1}^{\top }\hat{\theta }_{11} +X_{2}^{\top }\hat{\theta }_{22}\right)}_{\text{optimistic reward}} -\underbrace{\left( X_{1}^{\top } \theta _{1}^{*} +X_{2}^{\top } \theta _{2}^{*}\right)}_{\text{actual reward}}\\
& = X_{1}^{\top }\left(\hat{\theta }_{11} -\theta _{1}^{*}\right) +X_{2}^{\top }\left(\hat{\theta }_{22} -\theta _{2}^{*}\right)\\
& \leqslant  L\left(\left\Vert \hat{\theta }_{11} -\theta _{1}^{*}\right\Vert +\left\Vert \hat{\theta }_{22} -\theta _{2}^{*}\right\Vert \right),
\end{aligned}
\end{equation}
% \begin{equation}
% \begin{aligned}
% R= & \underbrace{\left( X_{1}^{\top } \theta _{2}^{*} +X_{2}^{\top } \theta _{1}^{*}\right)}_{\text{optimal reward}} -\underbrace{\left( X_{1}^{\top } \theta _{1}^{*} +X_{2}^{\top } \theta _{2}^{*}\right)}_{\text{actual reward}}
% \leqslant  \underbrace{\left( X_{1}^{\top }\hat{\theta }_{11} +X_{2}^{\top }\hat{\theta }_{22}\right)}_{\text{optimistic reward}} -\underbrace{\left( X_{1}^{\top } \theta _{1}^{*} +X_{2}^{\top } \theta _{2}^{*}\right)}_{\text{actual reward}}\\
% = & X_{1}^{\top }\left(\hat{\theta }_{11} -\theta _{1}^{*}\right) +X_{2}^{\top }\left(\hat{\theta }_{22} -\theta _{2}^{*}\right)
% \leqslant  L\left(\left\Vert \hat{\theta }_{11} -\theta _{1}^{*}\right\Vert +\left\Vert \hat{\theta }_{22} -\theta _{2}^{*}\right\Vert \right),
% \end{aligned}
% \end{equation}
where $L=\max\{\left\Vert X_{1}^{\top }\right\Vert ,\left\Vert X_{2}^{\top }\right\Vert\}$. This implies that we could use the size of the confidence intervals to bound the regret. It suffices to show that the confidence intervals in \eqref{eq:CI_def} shrink at a sublinear rate. The details can be found in the proof of Theorem \ref{thm:upper}.

Therefore, a critical step for upper bounding the regret is to understand how fast the confidence intervals $\hat{\Theta }_{k}$'s shrink when more rounds of data are available. For fixed $k\in [ K]$, we obtain
\begin{equation}
\begin{aligned}
\hat{\theta }_{k,t} 
& = V_{k,t}^{-1} H_{k,t} =V_{k,t}^{-1}\sum _{s\in \Psi _{k,t}} Y_{k_{s} ,s} X_{k_{s} ,s} \\ 
& = V_{k,t}^{-1}\sum _{s\in \Psi _{k,t}} X_{k_{s} ,s}\left( X_{k_{s} ,t}^{\top } \theta _{k}^{*} +\epsilon _{k_{s} ,t}\right)\\
& = V_{k,t}^{-1}\left( V_{k,t} \theta _{k}^{*} -\lambda \theta _{k}^{*} +\sum _{s\in \Psi _{k,t}} \epsilon _{k_{s} ,t} X_{k_{s} ,s}\right) \\
& = \theta _{k}^{*} -\lambda V_{k,t}^{-1} \theta _{k}^{*} +V_{k,t}^{-1} S_{k,t} ,
\end{aligned}
\end{equation}
% \begin{equation}
% \begin{aligned}
% \hat{\theta }_{k,t} = & V_{k,t}^{-1} H_{k,t} =V_{k,t}^{-1}\sum _{s\in \Psi _{k,t}} Y_{k_{s} ,s} X_{k_{s} ,s} = V_{k,t}^{-1}\sum _{s\in \Psi _{k,t}} X_{k_{s} ,s}\left( X_{k_{s} ,t}^{\top } \theta _{k}^{*} +\epsilon _{k_{s} ,t}\right)\\
% = & V_{k,t}^{-1}\left( V_{k,t} \theta _{k}^{*} -\lambda \theta _{k}^{*} +\sum _{s\in \Psi _{k,t}} \epsilon _{k_{s} ,t} X_{k_{s} ,s}\right) = \theta _{k}^{*} -\lambda V_{k,t}^{-1} \theta _{k}^{*} +V_{k,t}^{-1} S_{k,t} ,
% \end{aligned}
% \end{equation}
where we define $S_{k,t} =\sum _{s\in \Psi _{k,t}} \epsilon _{k_{s} ,t} X_{k_{s} ,s}$. Consequently, we can bound the difference between $\theta _{k}^{*}$ and $\hat{\theta }_{k,t}$ as
\begin{equation}\label{eq:theta_dist}
\begin{aligned}
& \left\Vert \hat{\theta }_{k,t} -\theta _{k}^{*}\right\Vert _{V_{k,t}} \\
= & \left\Vert V_{k,t}^{-1} S_{k,t} -\lambda V_{k,t}^{-1} \theta _{k}^{*}\right\Vert _{V_{k,t}} \\
\leqslant & \Vert S_{k,t}\Vert _{V_{k,t}^{-1}} +\lambda \left\Vert \theta _{k}^{*}\right\Vert _{V_{k,t}^{-1}}\\
\leqslant & \Vert S_{k,t}\Vert _{V_{k,t}^{-1}} +\lambda \left\Vert \theta _{k}^{*}\right\Vert _{( \lambda I)^{-1}} \\
= &\Vert S_{k,t}\Vert _{V_{k,t}^{-1}} +\sqrt{\lambda }\left\Vert \theta _{k}^{*}\right\Vert _{2} \\
\leqslant & \Vert S_{k,t}\Vert _{V_{k,t}^{-1}} +\sqrt{\lambda } S,
\end{aligned}
\end{equation}
% \begin{equation}\label{eq:theta_dist}
% \begin{aligned}
% \left\Vert \hat{\theta }_{k,t} -\theta _{k}^{*}\right\Vert _{V_{k,t}} & =\left\Vert V_{k,t}^{-1} S_{k,t} -\lambda V_{k,t}^{-1} \theta _{k}^{*}\right\Vert _{V_{k,t}} \leqslant \Vert S_{k,t}\Vert _{V_{k,t}^{-1}} +\lambda \left\Vert \theta _{k}^{*}\right\Vert _{V_{k,t}^{-1}}\\
%  & \leqslant \Vert S_{k,t}\Vert _{V_{k,t}^{-1}} +\lambda \left\Vert \theta _{k}^{*}\right\Vert _{( \lambda I)^{-1}}
% =\Vert S_{k,t}\Vert _{V_{k,t}^{-1}} +\sqrt{\lambda }\left\Vert \theta _{k}^{*}\right\Vert _{2}
% \leqslant \Vert S_{k,t}\Vert _{V_{k,t}^{-1}} +\sqrt{\lambda } S,
% \end{aligned}
% \end{equation}
where the first inequality comes from the triangle inequality and the second inequality holds true due to \eqref{eq:VH_defs}. Bounding $S_{k,t}$ is challenging, as actions in each round are not independent of each other. \citet{abbasi2011improved} employed a martingale argument to prove the following: for any $\delta _{k}  >0$, with probability at least $1-\delta _{k}$, for all $t >0$,
\begin{equation}\label{eq:S_size_k}
\begin{aligned}
& \Vert S_{k,t}\Vert _{\overline{V}_{k,t}^{-1}}^{2}
\leqslant 2\sigma ^{2}\log\left(\frac{(\det V_{k,t})^{1/2}}{\lambda ^{d/2} \delta _{k}}\right) \\
\leqslant & 2\sigma ^{2}\left(\log\frac{1}{\delta _{k}} +d\log\frac{d\lambda +| \Psi _{k,t}| L^{2}}{d\lambda }\right) .
\end{aligned}
\end{equation}
% \begin{equation}\label{eq:S_size_k}
% \Vert S_{k,t}\Vert _{\overline{V}_{k,t}^{-1}}^{2} \leqslant 2\sigma ^{2}\log\left(\frac{(\det V_{k,t})^{1/2}}{\lambda ^{d/2} \delta _{k}}\right)
% \leqslant 2\sigma ^{2}\left(\log\frac{1}{\delta _{k}} +d\log\frac{d\lambda +| \Psi _{k,t}| L^{2}}{d\lambda }\right) .
% \end{equation}
Note that \eqref{eq:S_size_k} only works for fixed $k$. Taking the union of \eqref{eq:S_size_k}'s for all $k$, we obtain that with probability at leaset $1-\sum _{k=1}^{K} \delta _{k}$, ~\eqref{eq:S_size_k} is true for all $k\in [ K]$ and $t\in [ T]$. Letting $\delta _{k} =\frac{\delta }{K}$, by combining \eqref{eq:CI_def}, \eqref{eq:theta_dist}, and \eqref{eq:S_size_k}, we obtain the following result on the tightness of the confidence intervals:
\begin{thm}
\label{thm:CI_pro}
For all $\delta  >0$,
$$
\mathbb{P}\left(\text{for all} \ ( k,t) \in [ K] \times [ T] ,\ \theta _{k}^{*} \in \hat{\Theta }_{k,t}\right) \geqslant 1-\delta .
$$
\end{thm}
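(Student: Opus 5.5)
The plan is to combine, for each fixed robot $k$, the decomposition \eqref{eq:theta_dist} with the self-normalized bound \eqref{eq:S_size_k}, and then take a union bound over the $K$ robots.

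\textbf{Step 1: a filtration for robot $k$.} Fix $k\in[K]$. Index the rounds in which robot $k$ is used, $s_1<s_2<\dots$, by $j=1,2,\dots$. The regressor at step $j$ is $X_{k_{s_j},s_j}$. It is determined by $\mathcal{H}_{s_j-1}$, the features $\mathcal{X}_{s_j}$ and the action $A_{s_j}$, so it is measurable with respect to a filtration $\mathcal{F}_{j-1}$ generated by everything revealed before the noise of round $s_j$. The noise $\epsilon_{k_{s_j},s_j}$ is $\sigma$-sub-Gaussian conditionally on $\mathcal{F}_{j-1}$; this follows from the paper's assumption on the noise given $(\mathcal{X}_t,A_t,\mathcal{H}_{t-1})$.

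This is the step needing most care. The random times $s_j$ are stopping-time-like, and other robots' noises in the same round must not break the conditional sub-Gaussianity. That is handled by the conditional independence of $\{\epsilon_{m,t}\}_m$ given $\mathcal{X}_t,A_t,\mathcal{H}_{t-1}$. The resulting sequence is exactly the setting of the self-normalized martingale theorem of \citet{abbasi2011improved}.

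\textbf{Step 2: confidence bound for a single robot.} Invoke that theorem, together with the determinant--trace inequality $\det V_{k,t}\le(\lambda+|\Psi_{k,t}|L^2/d)^d$ (which uses $\|X\|\le L$), to get \eqref{eq:S_size_k} with $\delta_k=\delta/K$. The bound holds uniformly over all $t$, because the theorem is uniform in time through a stopping-time argument. Hence, with probability at least $1-\delta/K$, for all $t$,
\[
\Vert S_{k,t}\Vert_{V_{k,t}^{-1}}\le\sigma\sqrt{2\log\tfrac{K}{\delta}+d\log\Bigl(1+\tfrac{|\Psi_{k,t}|L^2}{d\lambda}\Bigr)}.
\]

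\textbf{Step 3: add the bias term.} By \eqref{eq:theta_dist}, which uses $V_{k,t}\succeq\lambda I$ and $\|\theta_k^*\|\le S$,
\[
\|\hat\theta_{k,t}-\theta_k^*\|_{V_{k,t}}\le\Vert S_{k,t}\Vert_{V_{k,t}^{-1}}+\sqrt{\lambda}S\le\rho_{k,t}.
\]
That is, $\theta_k^*\in\hat\Theta_{k,t}$ for all $t$ on this event.

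\textbf{Step 4: union over robots.} Taking a union bound over $k\in[K]$, the failure probability is at most $\sum_k\delta/K=\delta$. This gives the claim simultaneously for all $(k,t)\in[K]\times[T]$.
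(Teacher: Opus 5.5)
Your proposal is correct and follows the paper's own argument. The paper applies the self-normalized bound of \citet{abbasi2011improved} to each fixed robot with $\delta_k=\delta/K$, takes a union bound over the $K$ robots, and uses the decomposition \eqref{eq:theta_dist} to add the $\sqrt{\lambda}S$ bias term and place $\theta_k^*$ inside $\hat{\Theta}_{k,t}$. Your Step~1 and Step~2 are, if anything, more careful than the paper's sketch. In Step~2 you write $d\log(\cdot)$ rather than the paper's displayed $2d\log(\cdot)$, and $d\log(\cdot)$ is what the determinant--trace inequality gives and what $\rho_{k,t}$ requires. The within-round conditional independence you invoke in Step~1 is not actually needed, since the other humans' noises at round $s_j$ are not in $\mathcal{F}_{j-1}$.
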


The constraint in Theorem \ref{thm:upper} that $\delta \leqslant \frac{K}{\exp( d)}$ is necessary for deriving the union bound in \eqref{eq:union}, as the inequality in Lemma \ref{lema:inequality_union_bound} is sharp. By setting $\delta = \frac{K}{\exp( d)}$, we have the following corollary:

\begin{corollary}
\label{cor:upper}
Assume $\left\Vert \theta _{k}^{*}\right\Vert \leqslant S$ for all $k\in [ K]$ and $\Vert X_{m,t}\Vert \leqslant L$ for all $( m,t) \in [ M] \times [ T]$. For $\delta $ and $\lambda $ that satisfy $\delta =\frac{K}{\exp( d)}$ and $\sqrt{\lambda } \geqslant \frac{\max_{k_{1} ,k_{2} \in [ K] ,s\in [ T]} X_{m,s}^{\top }\left( \theta _{k_{1}}^{*} -\theta _{k_{2}}^{*}\right)}{2S}$, with probability at least $1-\frac{K}{\exp( d)}$, the regret of the {\tt LinMatch} algorithm satisfies
\begin{equation*}
R_{t} \leqslant C\sigma d\sqrt{tMK} ,\ \ \ \text{for all } t\in [ T] ,
\end{equation*}
where
% $C=\sqrt{8\log\left( 1+\frac{tML^{2}}{dK\lambda }\right)}\left(\sqrt{2+\log\left( 1+\frac{tML^{2}}{dK\lambda }\right)} +\sqrt{\frac{\lambda }{d}}\frac{S}{\sigma }\right)$.
\begin{equation*}
\begin{aligned}
C= & \sqrt{8\log\left( 1+\frac{tML^{2}}{dK\lambda }\right)} \times \\
 & \left(\sqrt{2+\log\left( 1+\frac{tML^{2}}{dK\lambda }\right)} +\sqrt{\frac{\lambda }{d}}\frac{S}{\sigma }\right).
\end{aligned}
\end{equation*}
\end{corollary}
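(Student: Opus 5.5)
The corollary follows from Theorem~\ref{thm:upper} by substituting $\delta=\frac{K}{\exp(d)}$ and simplifying. This choice of $\delta$ satisfies condition \eqref{eq:delta_condition} with equality. The hypothesis on $\lambda$ is exactly \eqref{eq:lambda_condition}. Hence, with probability at least $1-\frac{K}{\exp(d)}$, the bound of Theorem~\ref{thm:upper} holds simultaneously for all $t\in[T]$. What remains is purely algebraic: show that the right-hand side of Theorem~\ref{thm:upper} equals, or is at most, $C\sigma d\sqrt{tMK}$.

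Write $\ell_t:=\log\left(1+\frac{tML^2}{dK\lambda}\right)$. With $\delta=K/\exp(d)$ we have $\log\frac{K}{\delta}=d$, so the second factor becomes
\[
\sigma\sqrt{2d+d\,\ell_t}+\sqrt{\lambda}S=\sigma\sqrt{d}\left(\sqrt{2+\ell_t}+\sqrt{\tfrac{\lambda}{d}}\tfrac{S}{\sigma}\right).
\]
The first factor is $2\sqrt{2dtMK\ell_t}=\sqrt{8\ell_t}\,\sqrt{d}\,\sqrt{tMK}$. Multiplying the two factors gives
\[
R_t\leqslant \sqrt{8\ell_t}\left(\sqrt{2+\ell_t}+\sqrt{\tfrac{\lambda}{d}}\tfrac{S}{\sigma}\right)\sigma d\sqrt{tMK}=C\sigma d\sqrt{tMK},
\]
which is the claim, with $C$ exactly as defined in the statement.

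There is no substantive obstacle beyond this algebra. The only point needing care is that the corollary's condition on $\lambda$ is written with $m$ left free. I would read it as the same maximum over $m\in[M]$, $k_1,k_2\in[K]$, $s\in[T]$ as in \eqref{eq:lambda_condition}, so that Theorem~\ref{thm:upper} applies verbatim.
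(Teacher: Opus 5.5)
Your proposal is correct and matches the paper, which obtains the corollary by substituting $\delta = K/\exp(d)$ into Theorem~\ref{thm:upper} (so that $\log(K/\delta)=d$) and factoring out $\sigma d\sqrt{tMK}$. Your algebra checks out, and your reading of the free index $m$ in the $\lambda$ condition as the maximum over $m\in[M]$, as in \eqref{eq:lambda_condition}, is the intended one.
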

This result implies that the algorithm has a sublinear upper bound of $O(d\sqrt{TMK})$ w.r.t. the number of total rounds $T$.

\subsection{ Regret Lower Bound}
We provide an instance-independent lower bound of the matching bandit problem with linear features.
\begin{thm}
\label{thm:lower_bound}
For the matching bandit problem described in Section \ref{sec:setting}, when $M \geq d$, there exists an matching bandit instance $\nu $ such that for any matching policy the regret satisfies
$
R_{\nu } \geqslant \frac{\sigma }{\sqrt{8}}\sqrt{dTK} .
$
\end{thm}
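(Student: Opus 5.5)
The plan is to embed $d$ independent copies of a standard $(K/d)$-armed Gaussian bandit into one matching instance, so that the classical minimax bound of order $\sqrt{(K/d)T}$ per copy sums to $\sqrt{dKT}$.

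\textbf{Construction.} Assume for simplicity that $d$ divides $K$ and $K/d\geq 2$. Partition the robots into $d$ groups $G_1,\dots,G_d$, each of size $K/d$. In every round, humans $1,\dots,d$ receive the standard basis vectors $X_{i,t}=e_i$, and the remaining $M-d\geq 0$ humans receive $X_{m,t}=\mathbf{0}$. This is possible because $M\geq d$, and these humans contribute zero expected reward whatever they are matched to. The noise is Gaussian with variance $\sigma^2$. For robot $k\in G_i$, set $\theta_k^*[j]=-1$ for $j\neq i$, and set $\theta_k^*[i]=\Delta\,\mathbf{1}\{k=k_i^\star\}$, where $k_i^\star\in G_i$ is a hidden ``good'' robot and $\Delta$ is tuned later. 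The optimal matching then sends human $i$ to $k_i^\star$, with per-round optimal reward $d\Delta$.

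\textbf{Decoupling.} Sending human $i$ to a robot outside $G_i$ costs at least $1\geq\Delta$ relative to the best robot in $G_i$. Hence, for any policy $\pi$, replacing each out-of-group assignment by a (say, fixed) in-group robot can only lower regret, and the regret splits as $\sum_{i=1}^d R^{(i)}$. Here $R^{(i)}$ is the regret of the induced policy on coordinate $i$, a $(K/d)$-armed bandit with arm means $\Delta\mathbf{1}\{k=k_i^\star\}$. Projecting $\pi$ onto coordinate $i$ gives a (randomized) policy for that bandit that also sees observations from the other coordinates. Those observations come from independent noise and from independently drawn $k_j^\star$ (taken uniform, independent across $j$), so they act as external randomization and carry no information about $k_i^\star$.

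\textbf{Per-coordinate bound.} For each coordinate I apply the standard change-of-measure argument. Take a base environment with all arms of mean $0$ in group $i$, and pick the arm $k'$ that is pulled least in expectation, so $\mathbb{E}_0[N_{k'}]\leq Td/K$. The KL divergence between the base environment and the one where $k'$ has mean $\Delta$ is $\frac{\Delta^2}{2\sigma^2}\mathbb{E}_0[N_{k'}]$. Bretagnolle--Huber (or Pinsker) then gives
\[
R^{(i)}\geq \tfrac{T\Delta}{4}\exp\!\Big(-\tfrac{\Delta^2 Td}{2\sigma^2K}\Big)
\]
for one of the two environments. Choosing $\Delta=\sigma\sqrt{K/(dT)}$ yields $R^{(i)}\geq c\,\sigma\sqrt{KT/d}$. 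To make the bound hold simultaneously over coordinates, average over the uniform prior on $(k_1^\star,\dots,k_d^\star)$; a Bayesian argument then shows a single instance $\nu$ exists with $\sum_i R^{(i)}\geq c\,\sigma\, d\sqrt{KT/d}=c\,\sigma\sqrt{dKT}$. Finally, check that $\|\theta_k^*\|\leq S$ and $\|X\|\leq L$ hold for suitable constants, provided $T\geq K/d$ so that $\Delta\leq 1$.

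\textbf{Main obstacles.} I expect the main difficulty to be the decoupling step: arguing rigorously that cross-group information and the injectivity constraint do not help. The injectivity constraint is harmless because the groups are disjoint. The second difficulty is tuning the constants to reach exactly $\sigma/\sqrt{8}$. For that I would use a two-point (Pinsker-based) version of the argument per coordinate, optimizing $\Delta$ explicitly, and treat non-divisible $K$ by leaving spare robots with all coordinates $-1$.
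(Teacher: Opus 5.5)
Your product family ($d$ disjoint groups of $K/d$ robots, one hidden good robot per group, fixed contexts $X_{i,t}=e_i$) is a sensible alternative to the paper's cyclic-shift family. However, the step that combines the $d$ coordinates does not work with the per-coordinate bound you propose.

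A least-pulled-arm, two-point Bretagnolle--Huber argument is frequentist. For a given policy and a given $k^\star_{-i}$ it produces \emph{some} bad $k_i^\star$, and that choice depends on $k^\star_{-i}$ and on the policy. So it cannot be made to hold for all $i$ at a single instance, and it says nothing about the prior average that your ``Bayesian argument'' needs. Even the two-point step does not go through as written. The all-zero reference has regret identically $0$, so Bretagnolle--Huber only helps if $\mathbb{P}_0(N_{k'}>T/2)$ is small, and Markov gives only $2d/K$, which is useless for $K/d=2$.

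What you need is the averaging argument, which is exactly the engine of the paper's proof. Fix $k^\star_{-i}$. Let $\nu_0$ put every robot of $G_i$ at mean $0$ in coordinate $i$, and let $\nu_j$ have $k_i^\star=j$. Only the pair $(i,j)$ differs, so Corollary~\ref{coro:KL_decomp} gives $\operatorname{KL}(\mathbb{P}_{\nu_0},\mathbb{P}_{\nu_j})=\frac{\Delta^2}{2\sigma^2}\mathbb{E}_{\nu_0}[N_{i,j}]$. Pinsker, the bound $\sum_{j\in G_i}\mathbb{E}_{\nu_0}[N_{i,j}]\le T$, and Jensen then give
\[
\frac{d}{K}\sum_{j\in G_i}\mathbb{E}_{\nu_j}[N_{i,j}]\le\frac{dT}{K}+\frac{T\Delta}{2\sigma}\sqrt{\frac{dT}{K}} .
\]
Since $\theta_k[i]\le\Delta\mathbf{1}\{k=k_i^\star\}$ for \emph{every} robot $k$, the regret is at least $\Delta\sum_{i=1}^{d}(T-\mathbb{E}[N_{i,k_i^\star}])$ outright. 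Averaged over the uniform product prior, the regret is therefore at least $d\Delta T\bigl(1-\frac{d}{K}-\frac{\Delta}{2\sigma}\sqrt{dT/K}\bigr)$. Optimizing $\Delta$ gives $\frac{\sigma}{2}(1-\frac{d}{K})^2\sqrt{dKT}$, and some instance attains at least the average.

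This also makes your decoupling step (simulating out-of-group observations, projecting the policy onto a coordinate) unnecessary, because the divergence decomposition already handles the joint process. The $-1$ entries can also be replaced by $0$, which removes the implicit requirement $S\geq\sqrt{d-1}$.

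Two further points of comparison. First, your construction needs $d\mid K$ and $K\geq 2d$, while the theorem assumes only $M\geq d$ (hence $K\geq d$). For $d\leq K<2d$ you would have to activate only about $K/2$ coordinates. Moreover, the factor $(1-\frac{d}{K})^2$ means you reach $\frac{\sigma}{\sqrt8}\sqrt{dKT}$ only when $K\gtrsim 6.3\,d$.

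The paper instead compares one all-zero instance with the $MK$ cyclic-shift instances $\nu^{(m,k)}$. There a single unknown shift fixes all $d$ optimal pairs, and each active human has all $K$ robots as candidate partners. This works for every $K\geq d$ and gives $\frac{\sigma}{\sqrt8}\frac{(K-1)^2}{K^2}\sqrt{dKT}$. So the paper, too, reaches the constant $\frac{\sigma}{\sqrt8}$ only asymptotically in $K$; its last step is an ``$\approx$''.

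Second, and in your favour, your family keeps the observed contexts $\mathcal{X}_t$ identical across instances. The common-policy decomposition of Theorem~\ref{thm:decomposition_KL} tacitly requires this, because the learner sees the contexts. The paper's instances change the human features with $m$ and are compared with a zero-feature base. Its KL computation is therefore literally valid only after fixing $m$ and summing over $k$ alone, which yields the same bound.
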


To show such a lower bound, we construct a family of similar bandit instances and lower bound the sum of their regret. The proof relies on the following variant of Pinsker's inequality:
\begin{lema}\label{lema:Pinsker}
For two probability measures $\mathbb{P}$ and $\mathbb{P}'$ on the same probability space $( \Omega ,\mathcal{F})$,
\begin{equation}\label{eq:pinsker}
\mathbb{P}'( A) \leqslant \mathbb{P}( A) +\sqrt{1-\exp\left( -\text{KL}\left( \mathbb{P},\mathbb{P}'\right)\right)} ,\ \ \text{for all }A\in \mathcal{F} ,
\end{equation}
where $\text{KL}\left( \mathbb{P},\mathbb{P}'\right)$ is the Kullback-Leibler (KL) divergence between $\mathbb{P}$ and $\mathbb{P}'$.
\end{lema}
% Its proof can be found in Section 14.3 of \cite{lattimore2020bandit}. 

% By reordering the terms in ~\eqref{eq:pinsker}, we have 
% \begin{equation*}
% \mathbb{P} (A)+\mathbb{P} '(A^{c} )\geqslant 1-\sqrt{1-\exp\left( -\text{KL}(\mathbb{P} ,\mathbb{P} ')\right)}.
% \end{equation*}
% which implies that, if $\text{KL}(\mathbb{P} ,\mathbb{P} ')$ is small, then the probability $\mathbb{P} (A)+\mathbb{P} '(A^{c} )$ will be high. 

% If we can construct several bandit instances on the same probability space where their pairwise KL divergence is small and 

\subsubsection{Decompose KL Divergence} 
To leverage Lemma \ref{lema:Pinsker}, we quantify the KL-divergence between two linear matching bandit instances. We first have to define the probability space and probability measure of the outcome of linear matching bandits. 
%A more detailed treatment of constructing the probability space of bandits can be found in Section 4.6 of \cite{lattimore2020bandit}. 
Second, we derive a decomposition property of the KL divergence of two linear matching bandits so we can express the total divergence as a weighted sum of divergence between different matching. We present the following result for decomposing the KL divergence and provide the details in Appendix \ref{app:ProbabilityMatching}.
% We first define probability measures on linear matching bandits. Consider the linear matching bandit $\nu =( \theta _{1} ,\dotsc ,\theta _{K} ,\mathcal{X}_{1} ,\dotsc ,\mathcal{X}_{t})$ defined in Section \ref{sec:setting}. Let $( A_{1} ,Y_{1} ,\dotsc ,A_{t} ,Y_{t})$ be the action-reward sequence produced by the interaction between $\nu $ and a policy $\pi =( \pi _{s})_{s=1}^{t}$. We define the probability space as $\Omega _{s} =\left(\mathcal{A}_{M,K} ,\mathbb{R}^{M}\right)^{s}$ and sigma algebra as $\mathcal{F}_{s} =\mathcal{B}( \Omega _{s})$ for $s\in [ t]$. A policy $\pi =( \pi _{s})_{s=1}^{t}$ is a sequence such that $\pi _{s}$ is a probability kernel from $( \Omega _{s-1} ,\mathcal{F}_{s-1})$ to $(\mathcal{A}_{M,K} ,\mathcal{B}(\mathcal{A}_{M,K}))$. Let $\mathbb{P}_{\nu }$ be a probability measure on $( \Omega _{t} ,\mathcal{F}_{t})$
% such that
% $\mathbb{P}_{\nu }( A_{s} | A_{1} ,Y_{1} ,\dotsc ,A_{s-1} ,Y_{s-1})=
% \pi _{s}( A_{s} | A_{1} ,Y_{1} ,\dotsc ,A_{s-1} ,Y_{s-1})
% $
% and 
% $
% \mathbb{P}_{\nu }( Y_{s} | A_{1} ,Y_{1} ,\dotsc ,A_{s-1} ,Y_{s-1} ,A_{s}) =P_{A_{s}}( Y_{s}) ,
% $
% where $P_{A_{s}}( Y_{s})$ is the conditional distribution of the reward $Y_{s}$ given the matching $A_{s}$. 

% Let $\nu ^{\prime } =\left( \theta _{1}^{\prime } ,\dotsc ,\theta _{K}^{\prime } ,\mathcal{X}_{1}^{\prime } ,\dotsc ,\mathcal{X}_{t}^{\prime }\right)$ be another linear matching bandit and $\mathbb{P}_{\nu ^{\prime }}$ be its associated probability measure. We can decompose $\operatorname{KL}(\mathbb{P}_{\nu } ,\mathbb{P}_{\nu ^{\prime }})$ as follows.

\begin{thm}\label{thm:decomposition_KL}
Let $\nu =( \theta _{1} ,\dotsc ,\theta _{K} ,\mathcal{X}_{1} ,\dotsc ,\mathcal{X}_{t})$ and $\nu ^{\prime } =\left( \theta _{1}^{\prime } ,\dotsc ,\theta _{K}^{\prime } ,\mathcal{X}_{1}^{\prime } ,\dotsc ,\mathcal{X}_{t}^{\prime }\right)$ be two linear matching bandits and $\mathbb{P}_{\nu }$ and $\mathbb{P}_{\nu ^{\prime }}$ be the corresponding probability measures under the same policy $\pi $. If $\operatorname{KL}\left(\mathbb{P} ,\mathbb{P}^{\prime }\right) < \infty $ and the noise $\{\epsilon _{m,s}\}_{m=1}^{M}$'s are independent of each other given matching $A_{s}$, then,
\begin{equation*}
\operatorname{KL}(\mathbb{P}_{\nu } ,\mathbb{P}_{\nu ^{\prime }}) =\sum _{s=1}^{t}\sum _{m=1}^{M}\mathbb{E}_{A_{s} \sim \nu \pi }\left[\operatorname{KL}\left( \mathbb{P}_{m,A_{s}} ,\mathbb{P}_{m,A_{s}}^{\prime }\right)\right] ,
\end{equation*}
where $\mathbb{P}_{m,A_{s}}$ is the distribution of $Y_{m,s}$ given $A_{s}$.
\end{thm}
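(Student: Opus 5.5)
We prove Theorem~\ref{thm:decomposition_KL} with the standard divergence-decomposition argument for bandits (as in Lattimore and Szepesv\'ari). The extra ingredient is independence of the $M$ reward coordinates within a round, which splits each per-round term into a sum over humans.

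\emph{Step 1: factorize the densities.} On $\Omega_t=(\mathcal{A}_{M,K}\times\mathbb{R}^M)^t$, take counting measure on the finite set $\mathcal{A}_{M,K}$ times a common dominating measure $\mu$ on $\mathbb{R}^M$ (e.g.\ $\mathbb{P}_{\nu}+\mathbb{P}_{\nu'}$ restricted to rewards). The features are fixed by an oblivious adversary, so they are deterministic. By the construction of $\mathbb{P}_\nu$ through the policy kernels $\pi_s$ and the reward kernels, the density factorizes as
\begin{equation*}
p_\nu(a_1,y_1,\dots,a_t,y_t)=\prod_{s=1}^t \pi_s(a_s\mid a_1,y_1,\dots,a_{s-1},y_{s-1})\, p_{a_s}(y_s).
\end{equation*}
The same factorization holds for $p_{\nu'}$, with the same policy factors.

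\emph{Step 2: take the log-likelihood ratio.} The policy factors cancel, so
\begin{equation*}
\log\frac{p_\nu}{p_{\nu'}}=\sum_{s=1}^t\log\frac{p_{A_s}(Y_s)}{p'_{A_s}(Y_s)}.
\end{equation*}
Finiteness of $\operatorname{KL}$ gives absolute continuity, which legitimizes the ratio. By conditional independence of $\{\epsilon_{m,s}\}_m$ given $A_s$ (and the history), $p_{A_s}(y_s)=\prod_{m=1}^M p_{m,A_s}(y_{m,s})$, and similarly for $p'$. The log-ratio therefore becomes a double sum over $s$ and $m$ of $\log\bigl(p_{m,A_s}(Y_{m,s})/p'_{m,A_s}(Y_{m,s})\bigr)$.

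\emph{Step 3: take expectations with the tower rule.} Under $\mathbb{P}_\nu$, condition on $(\mathcal{H}_{s-1},A_s)$. Given $A_s=a$, the reward $Y_{m,s}$ has law $\mathbb{P}_{m,a}$, because the reward kernel depends only on the current matching and the fixed features. The conditional expectation of the $(s,m)$ summand is then exactly $\operatorname{KL}(\mathbb{P}_{m,A_s},\mathbb{P}'_{m,A_s})$. Taking the outer expectation over $A_s\sim\nu\pi$ and summing over $s$ and $m$ yields the claimed identity.

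The main obstacle is Step 3's exchange of expectation and the infinite sum, since individual log-ratio terms need not be integrable. I would handle it as in the standard proof. Because $\operatorname{KL}<\infty$, the conditional KL terms are nonnegative. Splitting each log-ratio into positive and negative parts and using the finiteness of the total KL shows every term is integrable, so linearity of expectation applies. Measurability of the kernels is taken from the appendix construction of $\mathbb{P}_\nu$.
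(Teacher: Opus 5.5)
Your proposal is correct and follows the same route as the paper's proof: factorize the path density into policy kernels and reward densities, and cancel the common policy factors (the paper's policies are kernels on past action--reward pairs only). Then split each round's reward density over humans by conditional independence and apply the tower rule conditioning on $A_s$. Your integrability argument is a refinement the paper omits; note that the sum over $(s,m)$ has only $tM$ terms, so there is no infinite-sum interchange, and linearity of expectation suffices once each log-ratio term is shown integrable.
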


When the noise $\epsilon _{m,s}$ is Gaussian with zero mean and $\sigma ^{2}$ variance, the distribution of $Y_{m,s}$ given $A_{s}$ is $N\left( X_{m,s}^{\top } \theta _{A_{s}( m)} ,\sigma ^{2}\right)$. As the KL divergence between two Gaussian distributions satisfies $\operatorname{KL}\left( N\left( \mu _{1} ,\sigma ^{2}\right) ,N\left( \mu _{2} ,\sigma ^{2}\right)\right) =\frac{( \mu _{1} -\mu _{2})^{2}}{2\sigma ^{2}}$, we can further decompose the KL-divergence in term of the reward difference in the two bandits as the following:

\begin{corollary}\label{coro:KL_decomp}
Let $\nu =( \theta _{1} ,\dotsc ,\theta _{K} ,\mathcal{X}_{1} ,\dotsc ,\mathcal{X}_{t})$ and $\nu ^{\prime } =\left( \theta _{1}^{\prime } ,\dotsc ,\theta _{K}^{\prime } ,\mathcal{X}_{1}^{\prime } ,\dotsc ,\mathcal{X}_{t}^{\prime }\right)$ be two linear matching bandits and $\mathbb{P}$ and $\mathbb{P}^{\prime }$ be the corresponding probability measures under the same policy $\pi $. If the noise $\{\epsilon _{m,s}\}_{m=1}^{M}$'s are independent Gaussian with zero mean and $\sigma ^{2}$ variance given matching $A_{s}$, then
\begin{equation*}
\begin{aligned}
 & \operatorname{KL}(\mathbb{P}_{\nu } ,\mathbb{P}_{\nu ^{\prime }})=\\
 & \frac{1}{2\sigma ^{2}}\sum _{m=1}^{M}\sum _{k=1}^{K}\sum _{s=1}^{t}\left(\left( X_{m,s}^{\top } \theta _{k} -X_{m,s}^{\prime \top } \theta _{k}^{\prime }\right)^{2}\mathbb{P}_{\nu }( A_{s}( m) =k)\right) ,
\end{aligned}
\end{equation*}
where $\mathbb{P}_{\nu }( A_{s}( m) =k)$ is the probability of assigning human $m$ robot $k$ at round $s$ under $\nu $.
\end{corollary}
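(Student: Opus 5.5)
The plan is to start from Theorem~\ref{thm:decomposition_KL} and evaluate each summand explicitly under the Gaussian noise assumption. For fixed $s$ and $m$, the conditional law $\mathbb{P}_{m,A_s}$ of $Y_{m,s}$ given $A_s$ is $N(X_{m,s}^{\top}\theta_{A_s(m)},\sigma^2)$ under $\nu$. The corresponding law $\mathbb{P}'_{m,A_s}$ under $\nu'$ is $N(X_{m,s}^{\prime\top}\theta'_{A_s(m)},\sigma^2)$.

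First, I would check the finiteness hypothesis $\operatorname{KL}(\mathbb{P}_\nu,\mathbb{P}_{\nu'})<\infty$ required by Theorem~\ref{thm:decomposition_KL}. This holds because every per-round conditional divergence between equal-variance Gaussians is finite. The horizon $t$ is finite, and the action set $\mathcal{A}_{M,K}$ is finite. Concretely, applying the chain rule for KL divergence round by round (the same argument that proves Theorem~\ref{thm:decomposition_KL}) yields a finite sum of finite terms. The policy kernels are identical under the two measures and so contribute zero.

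Second, I would substitute the closed form $\operatorname{KL}(N(\mu_1,\sigma^2),N(\mu_2,\sigma^2))=(\mu_1-\mu_2)^2/(2\sigma^2)$. This gives
\[
\operatorname{KL}(\mathbb{P}_{\nu},\mathbb{P}_{\nu'})=\frac{1}{2\sigma^2}\sum_{s=1}^{t}\sum_{m=1}^{M}\mathbb{E}_{A_s\sim\nu\pi}\Bigl[\bigl(X_{m,s}^{\top}\theta_{A_s(m)}-X_{m,s}^{\prime\top}\theta'_{A_s(m)}\bigr)^2\Bigr].
\]

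Third, I would rewrite the expectation over the random matching by conditioning on the value of $A_s(m)$. Since $A_s(m)\in[K]$, we have the identity $f(A_s(m))=\sum_{k=1}^{K} f(k)\,1_{\{A_s(m)=k\}}$. Taking the expectation under $\mathbb{P}_\nu$ and using linearity turns each expectation into $\sum_{k}(X_{m,s}^{\top}\theta_k-X_{m,s}^{\prime\top}\theta'_k)^2\,\mathbb{P}_\nu(A_s(m)=k)$. The features are deterministic here, since they are fixed by an oblivious adversary, so they pass through the expectation. Reordering the finite sums then gives the stated formula.

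The main obstacle I expect is only the bookkeeping in this last step. Only the marginal of $A_s(m)$ matters, even though $A_s$ is a random injection correlated across $m$. The per-$m$ KL term depends on $A_s$ only through $A_s(m)$, so the marginal suffices.
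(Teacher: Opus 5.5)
Your proposal is correct and follows essentially the same route as the paper: apply Theorem~\ref{thm:decomposition_KL}, substitute the equal-variance Gaussian KL formula, and reduce the expectation over the random matching $A_s$ to the marginal law of $A_s(m)$. Your indicator identity $f(A_s(m))=\sum_{k} f(k)1_{\{A_s(m)=k\}}$ is a cleaner version of the paper's sum over $\mathcal{A}_{M,K}$ followed by conditioning on $A_s(m)=k$, and your explicit check of the hypothesis $\operatorname{KL}(\mathbb{P}_{\nu},\mathbb{P}_{\nu'})<\infty$ required by Theorem~\ref{thm:decomposition_KL} is a small step the paper leaves implicit.
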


With Lemma~\ref{lema:Pinsker} and Corollary~\ref{coro:KL_decomp}, we now can prove Theorem~\ref{thm:lower_bound} by constructing a set of bandit instances to show that any algorithm would suffer a $\Omega(\sqrt{dTK})$ regret on at least one bandit within this set. The proof is detailed in the Appendix.

\section{Numerical Experiments}
\subsection{Simulations Studies}
We tested the {\tt LinMatch} algorithm in simulations with $K=20$ robots and $M=10$ human agents. The noise is Gaussian with a variance of $\sigma^2=9$. The feature vectors $X_{m,t}$ and $\theta_k$ all satisfy Assumption \ref{assum:norm} with $S=L=10$, and they are generated uniformly from the cube $[-\frac{L}{\sqrt{d}},\frac{L}{\sqrt{d}}]^d$. The human agents' features are regenerated in each round while the robots' features are fixed throughout the experiment.

\begin{figure}[t]
  \centering
    
    \begin{subfigure}{1\columnwidth}
    \captionsetup{width=1\linewidth}
        \centering
        \includegraphics[width=0.9\columnwidth]{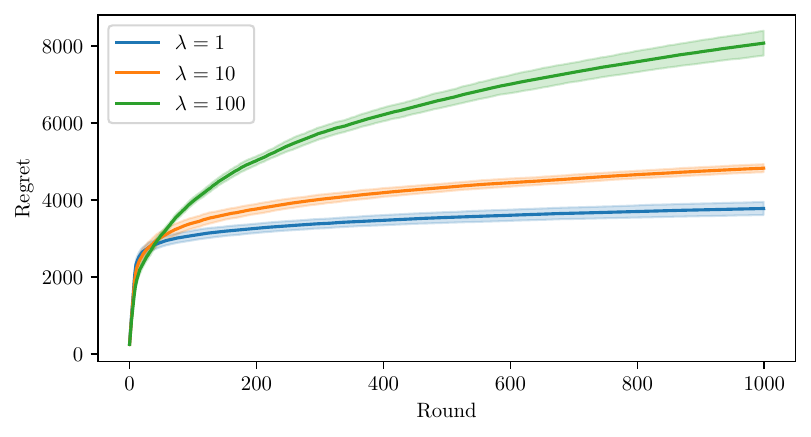}
        \caption{Regret $R$ against number of rounds $T$ with different values of $\lambda$.}
        \label{fig:RvsT}
    \end{subfigure}
    
    \begin{subfigure}{1\columnwidth}
    \captionsetup{width=1\linewidth}
        \centering
        \vspace{5pt}
        \includegraphics[width=0.9\columnwidth]{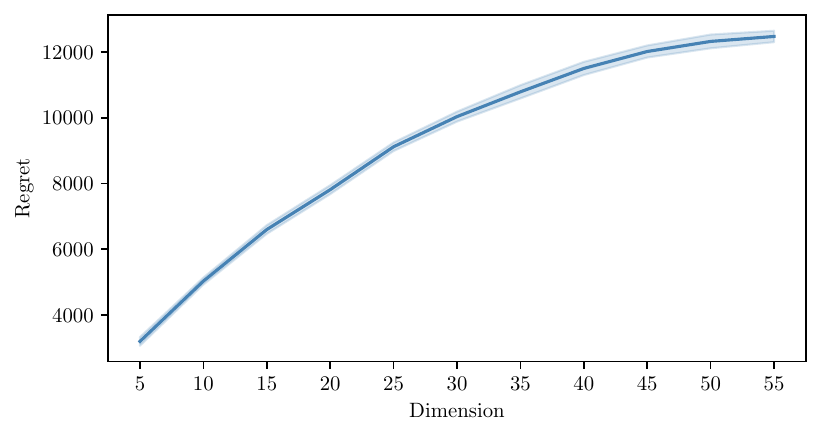}
        \caption{Regret $R$ against feature dimension $d$.}
        \label{fig:Rvsd}
    \end{subfigure}
    \caption{Experiment results with the number of robots $K=20$, number of humans $M=10$, and dimension of the feature vectors $d=5$. The shades represent the 95\% confidence intervals.}
  \vspace{-3mm}
\end{figure}

% \begin{figure}[ht]%
% \centering
% \subfigure[Regret $R$ against number of rounds $T$ with different values of $\lambda$.]{%
% \label{fig:RvsT}%
% \includegraphics[width=0.9\columnwidth]{figs/compare_with_lambda.pdf}}%
% \qquad
% \subfigure[Regret $R$ against feature dimension $d$.]{%
% \label{fig:Rvsd}%
% \includegraphics[width=0.9\columnwidth]{figs/compare_with_d.pdf}}%
% \caption{Experiment results with the number of robots $K=20$, number of humans $M=10$, and dimension of the feature vectors $d=5$. The shades represent the 95\% confidence intervals.}
% \vspace{-5mm}
% \end{figure}

\textbf{Varying regularization factor $\lambda$.} In the first setting, with $d=5$, we run the proposed matching algorithm for $T=1000$ rounds with 10 repetitions. The average regret is shown in Figure \ref{fig:RvsT}. As we mentioned in Section \ref{sec:upper_bound_regret}, a sufficient condition for Theorem \ref{thm:upper} to apply is that the regularization factor $\lambda$ is greater or equal to $L^2$, which translates to $\lambda\geq 100$ in this setting. However, the coefficient $C$ in Corollary \ref{cor:upper} suggests that the upper bound tends to increase when $\lambda$ increases. When we ran the experiment with different values of $\lambda$, we found that the average regret is smaller in the long run when $\lambda=1$. This finding suggests that in practical application, a lower value of $\lambda$ may lead to better outcomes, contrary to the sufficient condition of $\lambda\geq L^2$ for the upper bound guarantee.

\begin{figure}[t]
  \centering
    
    \begin{subfigure}{1\columnwidth}
    \captionsetup{width=1\linewidth}
        \centering
        \includegraphics[width=0.9\columnwidth]{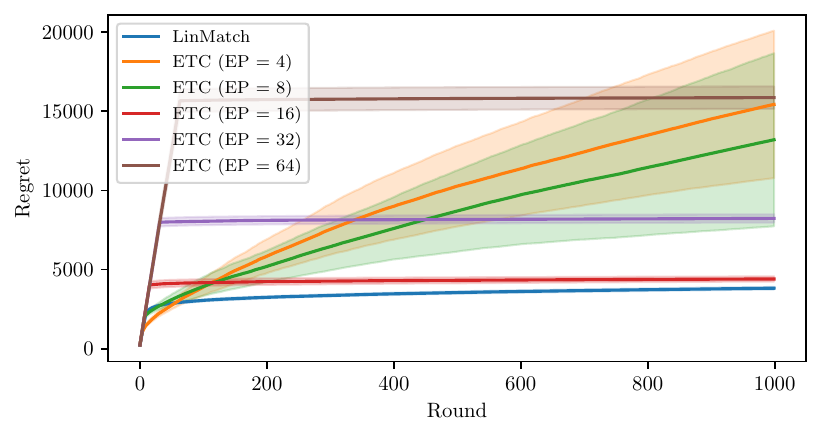}
        \caption{Comparing {\tt LinMatch} with ETC algorithm.}
        \label{fig:vsETC}
    \end{subfigure}
    
    \begin{subfigure}{1\columnwidth}
    \captionsetup{width=1\linewidth}
        \centering
        \vspace{5pt}
        \includegraphics[width=0.9\columnwidth]{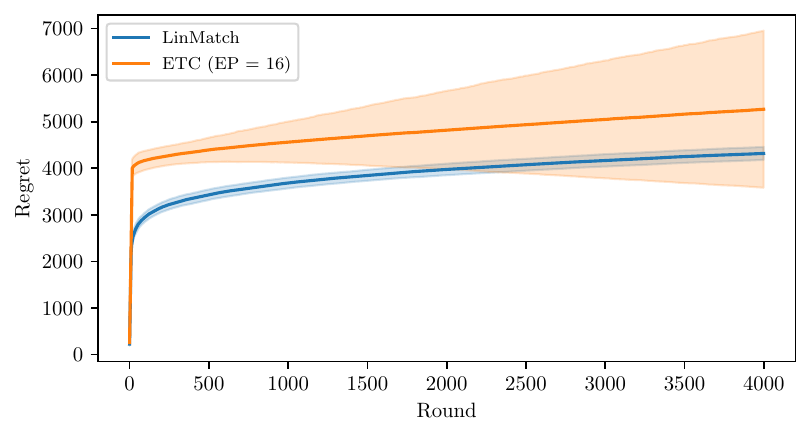}
        \caption{ETC algorithm has a higher variance in regret.}
        \label{fig:ETCVar}
    \end{subfigure}
    \caption{Comparing {\tt LinMatch} with ETC algorithm. EP indicates the length of the exploration phase. Solid lines are the mean values. Shades represent the 95\% confidence intervals.}
  \vspace{-3mm}
\end{figure}

% \begin{figure}[h]%
% \centering
% \subfigure[Comparing {\tt LinMatch} with ETC algorithm.]{%
% \label{fig:vsETC}%
% \includegraphics[width=0.9\columnwidth]{figs/compare_with_ETC_wide_range.pdf}}%
% \qquad
% \subfigure[ETC algorithm has a higher variance in regret.]{%
% \label{fig:ETCVar}%
% \includegraphics[width=0.9\columnwidth]{figs/compare_with_ETC_16_T=4000.pdf}}%
% \caption{Comparing {\tt LinMatch} with ETC algorithm. EP indicates the length of the exploration phase. Solid lines are the mean values. Shades represent the 95\% confidence intervals.}
% \end{figure}

\textbf{Varying dimension $d$.} In the second setting, we ran the algorithm with different feature dimensions, as shown in Figure~\ref{fig:Rvsd}. We repeated the experiment 10 times and calculated the average regret after 200 rounds ($T=200$). Results indicate that $R$ increases nonlinearly with $d$, which deviates from our upper bound. We speculate that the upper bound could be increased by a factor of $\sqrt{d}$.

\subsection{Comparing with explore-then-commit (ETC)} 
Since previous studies differ from our setting and thus cannot be directly applied, we compare our proposed algorithm with an explore-then-commit (ETC) algorithm. It is worth noting that \citet{li2022rate} proposed an ETC algorithm for the matching market; however, it cannot be directly applied to our scenario due to its exploration phase (EP) length being based on stable regret. Instead, we run the ETC algorithm with several different EP lengths, as shown in Figure~\ref{fig:vsETC}. Due to the discrete structure of matching problems, the ETC algorithm can achieve a sublinear regret if it can have an accurate estimation of the feature vectors after the exploration phase, though it suffers linear regret otherwise. We notice that the ETC algorithm shows similar performance to {\tt LinMatch} when EP $=16$ (see Figure~\ref{fig:ETCVar}). The 95\% confidence interval of ETC covers the curve of {\tt LinMatch}, indicating occasional superior performance. However, the narrower confidence interval of {\tt LinMatch} suggests more consistent performance.

\section{Conclusion}
We propose \texttt{LinMatch}, a UCB-like algorithm, to address the online multi-human multi-robot matching problem in a linear matching bandit framework. The optimistic matching problem in each round can be reformulated as a linear programming problem of maximum weighted matching and thus can be efficiently solved by the celebrated Hungarian algorithm. 

Our regret analysis shows that, for $T$ rounds, $M$ humans, $K$ robots, and $d$ dimensional feature vectors, the regret of {\tt LinMatch} is bounded by $O\left( d\sqrt{MKT}\right)$ and $\Omega\left(\sqrt{dKT}\right)$, which implies that the algorithm achieves sublinear regret and is rate-optimal in terms of the total number of rounds $T$. Numerical experiments show that, in the upper bound, the constraint on the regularization factor could be further relaxed to achieve lower regret and the $d$ factor could potentially be tightened to $\sqrt{d}$ to match the lower bound, which suggests directions for future study. 

In addition, the proposed algorithm utilizes the linear reward structure to recover the unknown feature vectors of the robots. Future studies can relax this assumption and assume more flexible forms of reward. For example, with a general reward structure, one can use the NeuralUCB algorithm \citep{zhou2020neural} to compute the upper confidence bounds of the matching rewards and then make the optimistic matching.

% =================
% =================
% =================
% =================
% =================
\subsubsection*{Acknowledgements}
This work is supported by the Air Force Office of Scientific Research under Grant No. FA9550-23-1-0044.
% All acknowledgments go at the end of the paper, including thanks to reviewers who gave useful comments, to colleagues who contributed to the ideas, and to funding agencies and corporate sponsors that provided financial support. 
% To preserve the anonymity, please include acknowledgments \emph{only} in the camera-ready papers. The acknowledgements do not count against the 9-page page limit in the camera-ready.

\bibliography{reference_list}

@inproceedings{erginbas2023interactive,
  title={Interactive Learning with Pricing for Optimal and Stable Allocations in Markets},
  author={Erginbas, Yigit Efe and Phade, Soham and Ramchandran, Kannan},
  booktitle={International Conference on Artificial Intelligence and Statistics},
  pages={9773--9806},
  year={2023},
  organization={PMLR}
}

@inproceedings{li2016contextual,
  title={Contextual combinatorial cascading bandits},
  author={Li, Shuai and Wang, Baoxiang and Zhang, Shengyu and Chen, Wei},
  booktitle={International conference on machine learning},
  pages={1245--1253},
  year={2016},
  organization={PMLR}
}

@INPROCEEDINGS{Wang2023,
  author={Wang, Ruiqi and Zhao, Dezhong and Min, Byung-Cheol},
  booktitle={2023 IEEE/RSJ International Conference on Intelligent Robots and Systems (IROS)}, 
  title={Initial Task Allocation for Multi-Human Multi-Robot Teams with Attention-Based Deep Reinforcement Learning}, 
  year={2023},
  volume={},
  number={},
  pages={7915-7922},
  doi={10.1109/IROS55552.2023.10341410}}

@article{Fu22,
    author = {Fu, Bo and Kathuria, Tribhi and Rizzo, Denise and Castanier, Matthew and Yang, X. Jessie and Ghaffari, Maani and Barton, Kira},
    title = "{Simultaneous Human-Robot Matching and Routing for Multi-Robot Tour Guiding Under Time Uncertainty}",
    journal = {Journal of Autonomous Vehicles and Systems},
    volume = {1},
    number = {4},
    pages = {041005},
    year = {2022},
    month = {02},
    issn = {2690-702X},
    doi = {10.1115/1.4053428},
    url = {https://doi.org/10.1115/1.4053428},
    eprint = {https://asmedigitalcollection.asme.org/autonomousvehicles/article-pdf/1/4/041005/6836042/javs\_1\_4\_041005.pdf},
}

@article{dahiya2022scalable,
  title={Scalable operator allocation for multirobot assistance: A restless bandit approach},
  author={Dahiya, Abhinav and Akbarzadeh, Nima and Mahajan, Aditya and Smith, Stephen L},
  journal={IEEE Transactions on Control of Network Systems},
  volume={9},
  number={3},
  pages={1397--1408},
  year={2022},
  publisher={IEEE}
}

@INPROCEEDINGS{JiRSS22, 
    AUTHOR    = {Tianchen Ji AND Roy Dong AND Katherine Driggs-Campbell}, 
    TITLE     = {{Traversing Supervisor Problem: An Approximately Optimal Approach to Multi-Robot Assistance}}, 
    BOOKTITLE = {Proceedings of Robotics: Science and Systems}, 
    YEAR      = {2022}, 
    ADDRESS   = {New York City, NY, USA}, 
    MONTH     = {June}, 
    DOI       = {10.15607/RSS.2022.XVIII.059} 
}

@inproceedings{lippi2023optimal,
  title={An optimal allocation and scheduling method in human-multi-robot precision agriculture settings},
  author={Lippi, Martina and Gallou, Jorand and Gasparri, Andrea and Marino, Alessandro},
  booktitle={2023 31st Mediterranean Conference on Control and Automation (MED)},
  pages={541--546},
  year={2023},
  organization={IEEE}
}

@book{conforti2014integer,
  title={Integer programming models},
  author={Conforti, Michele and Cornu{\'e}jols, G{\'e}rard and Zambelli, Giacomo and Conforti, Michele and Cornu{\'e}jols, G{\'e}rard and Zambelli, Giacomo},
  year={2014},
  publisher={Springer}
}

@inproceedings{zhou2020neural,
  title={Neural contextual bandits with ucb-based exploration},
  author={Zhou, Dongruo and Li, Lihong and Gu, Quanquan},
  booktitle={International Conference on Machine Learning},
  pages={11492--11502},
  year={2020},
  organization={PMLR}
}

@article{liu2021coordinating,
  title={Coordinating human-robot teams with dynamic and stochastic task proficiencies},
  author={Liu, Ruisen and Natarajan, Manisha and Gombolay, Matthew C},
  journal={ACM Transactions on Human-Robot Interaction (THRI)},
  volume={11},
  number={1},
  pages={1--42},
  year={2021},
  publisher={ACM New York, NY}
}

@inproceedings{gombolay2015coordination,
  title={Coordination of human-robot teaming with human task preferences},
  author={Gombolay, Matthew Craig and Huang, Cindy and Shah, Julie},
  booktitle={2015 AAAI Fall Symposium Series},
  year={2015}
}

@inproceedings{freedy2008multiagent,
  title={Multiagent Adjustable Autonomy Framework (MAAF) for multi-robot, multi-human teams},
  author={Freedy, Amos and Sert, Onur and Freedy, Elan and McDonough, James and Weltman, Gershon and Tambe, Milind and Gupta, Tapana and Grayson, William and Cabrera, Pedro},
  booktitle={2008 International Symposium on Collaborative Technologies and Systems},
  pages={498--505},
  year={2008},
  organization={IEEE}
}

@inproceedings{ramchurn2015study,
  title={A study of human-agent collaboration for multi-UAV task allocation in dynamic environments},
  author={Ramchurn, Sarvapali D and Fischer, Joel E and Ikuno, Yuki and Wu, Feng and Flann, Jack and Waldock, Antony},
  booktitle={Proceedings of the 24th International Joint Conference on Artificial Intelligence (IJCAI)},
  pages={1184--1192},
  year={2015}
}

@article{rusmevichientong2010linearly,
  title={Linearly parameterized bandits},
  author={Rusmevichientong, Paat and Tsitsiklis, John N},
  journal={Mathematics of Operations Research},
  volume={35},
  number={2},
  pages={395--411},
  year={2010},
  publisher={INFORMS}
}

@inproceedings{abe1999associative,
  title={Associative reinforcement learning using linear probabilistic concepts},
  author={Abe, Naoki and Long, Philip M},
  booktitle={ICML},
  pages={3--11},
  year={1999},
  organization={Citeseer}
}

@inproceedings{liu2018contextual,
  title={Contextual dependent click bandit algorithm for web recommendation},
  author={Liu, Weiwen and Li, Shuai and Zhang, Shengyu},
  booktitle={International Computing and Combinatorics Conference},
  pages={39--50},
  year={2018},
  organization={Springer}
}

@inproceedings{katariya2016dcm,
  title={DCM bandits: Learning to rank with multiple clicks},
  author={Katariya, Sumeet and Kveton, Branislav and Szepesvari, Csaba and Wen, Zheng},
  booktitle={International Conference on Machine Learning},
  pages={1215--1224},
  year={2016},
  organization={PMLR}
}

@inproceedings{kveton2015cascading,
  title={Cascading bandits: Learning to rank in the cascade model},
  author={Kveton, Branislav and Szepesvari, Csaba and Wen, Zheng and Ashkan, Azin},
  booktitle={International conference on machine learning},
  pages={767--776},
  year={2015},
  organization={PMLR}
}

@article{auer2002using,
  title={Using confidence bounds for exploitation-exploration trade-offs},
  author={Auer, Peter},
  journal={Journal of Machine Learning Research},
  volume={3},
  number={Nov},
  pages={397--422},
  year={2002}
}

@inproceedings{chu2011contextual,
  title={Contextual bandits with linear payoff functions},
  author={Chu, Wei and Li, Lihong and Reyzin, Lev and Schapire, Robert},
  booktitle={Proceedings of the Fourteenth International Conference on Artificial Intelligence and Statistics},
  pages={208--214},
  year={2011},
  organization={JMLR Workshop and Conference Proceedings}
}

@inproceedings{li2011unbiased,
  title={Unbiased offline evaluation of contextual-bandit-based news article recommendation algorithms},
  author={Li, Lihong and Chu, Wei and Langford, John and Wang, Xuanhui},
  booktitle={Proceedings of the fourth ACM international conference on Web search and data mining},
  pages={297--306},
  year={2011}
}

@article{jagadeesan2021learning,
  title={Learning equilibria in matching markets from bandit feedback},
  author={Jagadeesan, Meena and Wei, Alexander and Wang, Yixin and Jordan, Michael and Steinhardt, Jacob},
  journal={Advances in Neural Information Processing Systems},
  volume={34},
  pages={3323--3335},
  year={2021}
}

@inproceedings{cen2022regret,
  title={Regret, stability \& fairness in matching markets with bandit learners},
  author={Cen, Sarah H and Shah, Devavrat},
  booktitle={International Conference on Artificial Intelligence and Statistics},
  pages={8938--8968},
  year={2022},
  organization={PMLR}
}

@article{gale1962college,
  title={College admissions and the stability of marriage},
  author={Gale, David and Shapley, Lloyd S},
  journal={The American Mathematical Monthly},
  volume={69},
  number={1},
  pages={9--15},
  year={1962},
  publisher={Taylor \& Francis}
}

@article{roth2005pairwise,
  title={Pairwise kidney exchange},
  author={Roth, Alvin E and S{\"o}nmez, Tayfun and {\"U}nver, M Utku},
  journal={Journal of Economic theory},
  volume={125},
  number={2},
  pages={151--188},
  year={2005},
  publisher={Elsevier}
}

@article{roth1984evolution,
  title={The evolution of the labor market for medical interns and residents: a case study in game theory},
  author={Roth, Alvin E},
  journal={Journal of political Economy},
  volume={92},
  number={6},
  pages={991--1016},
  year={1984},
  publisher={The University of Chicago Press}
}

@inproceedings{liu2020competing,
  title={Competing bandits in matching markets},
  author={Liu, Lydia T and Mania, Horia and Jordan, Michael},
  booktitle={International Conference on Artificial Intelligence and Statistics},
  pages={1618--1628},
  year={2020},
  organization={PMLR}
}

@article{liu2021bandit,
  title={Bandit learning in decentralized matching markets},
  author={Liu, Lydia T and Ruan, Feng and Mania, Horia and Jordan, Michael I},
  journal={The Journal of Machine Learning Research},
  volume={22},
  number={1},
  pages={9612--9645},
  year={2021},
  publisher={JMLRORG}
}

@book{pena2009self,
  title={Self-normalized processes: Limit theory and Statistical Applications},
  author={Pe{\~n}a, Victor H and Lai, Tze Leung and Shao, Qi-Man},
  year={2009},
  publisher={Springer}
}

@book{lattimore2020bandit,
  title={Bandit algorithms},
  author={Lattimore, Tor and Szepesv{\'a}ri, Csaba},
  year={2020},
  publisher={Cambridge University Press}
}

@inproceedings{li2010contextual,
  title={A contextual-bandit approach to personalized news article recommendation},
  author={Li, Lihong and Chu, Wei and Langford, John and Schapire, Robert E},
  booktitle={Proceedings of the 19th international conference on World wide web},
  pages={661--670},
  year={2010}
}

@article{lattimore2018toprank,
  title={Toprank: A practical algorithm for online stochastic ranking},
  author={Lattimore, Tor and Kveton, Branislav and Li, Shuai and Szepesvari, Csaba},
  journal={Advances in Neural Information Processing Systems},
  volume={31},
  year={2018}
}

@inproceedings{le2006multi,
  title={Multi-agent task assignment in the bandit framework},
  author={Le Ny, Jerome and Dahleh, Munther and Feron, Eric},
  booktitle={Proceedings of the 45th IEEE Conference on Decision and Control},
  pages={5281--5286},
  year={2006},
  organization={IEEE}
}

@inproceedings{hassan2014multi,
  title={A multi-armed bandit approach to online spatial task assignment},
  author={Hassan, Umair Ul and Curry, Edward},
  booktitle={2014 IEEE 11th Intl Conf on Ubiquitous Intelligence and Computing and 2014 IEEE 11th Intl Conf on Autonomic and Trusted Computing and 2014 IEEE 14th Intl Conf on Scalable Computing and Communications and Its Associated Workshops},
  pages={212--219},
  year={2014},
  organization={IEEE}
}

@article{ul2016efficient,
  title={Efficient task assignment for spatial crowdsourcing: A combinatorial fractional optimization approach with semi-bandit learning},
  author={ul Hassan, Umair and Curry, Edward},
  journal={Expert Systems with Applications},
  volume={58},
  pages={36--56},
  year={2016},
  publisher={Elsevier}
}

@inproceedings{li2019online,
  title={Online learning to rank with features},
  author={Li, Shuai and Lattimore, Tor and Szepesv{\'a}ri, Csaba},
  booktitle={International Conference on Machine Learning},
  pages={3856--3865},
  year={2019},
  organization={PMLR}
}

@article{li2022rate,
  title={Rate-optimal contextual online matching bandit},
  author={Li, Yuantong and Wang, Chi-hua and Cheng, Guang and Sun, Will Wei},
  journal={arXiv preprint arXiv:2205.03699},
  year={2022}
}

@article{tewari2017ads,
  title={From ads to interventions: Contextual bandits in mobile health},
  author={Tewari, Ambuj and Murphy, Susan A},
  journal={Mobile health: sensors, analytic methods, and applications},
  pages={495--517},
  year={2017},
  publisher={Springer}
}

@article{abbasi2011improved,
  title={Improved algorithms for linear stochastic bandits},
  author={Abbasi-Yadkori, Yasin and P{\'a}l, D{\'a}vid and Szepesv{\'a}ri, Csaba},
  journal={Advances in neural information processing systems},
  volume={24},
  year={2011}
}

\clearpage
\appendix
\thispagestyle{empty}

% Supplementary material: To improve readability, you must use a single-column format for the supplementary material.
\onecolumn
\aistatstitle{Appendix to A Linear Matching Bandit Approach to \\ Online Multi-Human Multi-Robot Matching}

\section{Summary of Major Notation}\label{app:sec:notation}
The major notation is summarized in Table \ref{tab:notations}.
\renewcommand{\arraystretch}{1.2}% Tighter
\begin{table}[h!]
    \caption{Summary of Major Notation}
    \centering
    {\begin{tabular}{c|p{12cm}}
        \toprule 
        \textbf{Notation} & \textbf{Description} \\ \hline
        $ K $               & Number of robots. \\ \hline
        $ M $               & Number of human agents. \\ \hline
        $ T $               & Number of rounds. \\ \hline
        $ d $               & Dimension of the feature vectors. \\ \hline
        $\theta_k $         & Feature vector of robot $k$. \\ \hline
        $X_m$               & Feature vector of human agent $m$. \\ \hline
        $S$                 & Bound on the magnitude of the robot feature vectors. \\ \hline
        $L$                 & Bound on the magnitude of the human feature vectors. \\ \hline
        $A_t$ & Matching at round $ t $. \\ \hline
        $ \mathcal{A}_{M,K} $ & Set of all injection functions from $ [M] $ to $ [K] $; all possible matching from $ [M] $ to $ [K] $. \\ \hline
        $ Y_{m,t} $         & Reward for human agent $ m $ at round $ t $. \\ \hline
        $ \mathbf{Y}_{t} $  & Reward vector for all human agents at round $ t $. \\ \hline
        $ \mathcal{H}_{t-1} $ & History of features, actions, and rewards up to round $ t-1 $. \\ \hline
        $\epsilon_{m,t}$ & Random noise of the reward produced by human $m$ with their robot. \\ \hline
        $ \sigma $          & Noise $\epsilon_{m,t}$ are $\sigma$-sub Gaussian. \\ \hline
        $ {R}_{T} $         & Regret over $ T $ rounds. \\ \hline
        $ \hat{\theta}_{k,t} $ & Estimated feature vector of robot $ k $ at round $ t $. \\ \hline
        $ \hat{\Theta}_{k,t}$ & Confidence ellipsoid for the parameter vector $ \theta_{k} $ at round $ t $. \\ \hline
        $ \rho_{k,t} $      & Confidence interval radius for robot $ k $ at round $ t $. \\ \hline
        $ {V}_{k,t} $       & Covariance matrix for robot $ k $ at round $ t $. \\ \hline
        $ {H}_{k,t} $       & Summation of feature vectors weighted by rewards for robot $ k $ at round $ t $. \\ \hline
        $ \lambda $         & Regularization parameter. \\ \hline
        $ \delta $          & Probability parameter used in the confidence interval calculation. \\ \hline
        $ {\Psi}_{k,t} $    & Set of rounds before $ t $ when robot $ k $ was assigned to a human. \\ \hline
        $ {S}_{k,t} $       & Summation of noise-weighted feature vectors for robot $ k $ at round $ t $. \\ \hline
        $ {I}_{{G}} $       & Incidence matrix of bipartite graph $ {G} $. \\ \hline
        $ \tilde{y}_{mk,t} $ & Optimistic estimation of reward for matching human $ m $ and robot $ k $ at round $ t $. \\ \hline
        $\nu$       & Matching bandit instance. \\ \hline
        $ \mathbb{P}_{\nu} $   & Probability measure based on bandit instance $ \nu $. \\ 
        \bottomrule
    \end{tabular}}
    \label{tab:notations}
\end{table}

\section{Detailed Examples of Online Multi-Human Multi-Robot Teaming} 
\label{appsec:examples}

\subsection{Example 1: Automated Warehouse Management}

\textbf{Scenario.} In an automated warehouse, human operators and robots collaborate to manage inventory and fulfill orders. Human operators handle high-level decision-making tasks such as monitoring inventory levels and addressing issues that arise, while robots perform physical tasks such as picking, packing, and transporting items. The warehouse operates in a dynamic environment where the number and type of tasks can change frequently. Human operators have varying levels of skills and experience, which influence their efficiency in different tasks. Similarly, robots have diverse capabilities, speeds, and reliabilities that affect their performance. 

\noindent \textbf{Human-robot online teaming.} To optimize the overall efficiency of warehouse operations, it is crucial to dynamically match human operators with robots in real-time. This involves continuously assessing the skills of human operators and the capabilities of robots, and making optimal assignments that maximize productivity. By doing so, the warehouse can ensure that tasks are completed more quickly and accurately, reducing errors and improving overall efficiency.

\noindent \textbf{Outcome.} Dynamic teaming in this context leads to higher operational efficiency, fewer errors, and increased productivity, as it allows the warehouse to adapt to changing conditions and make the best use of available human and robotic resources.

\subsection{Example 2: Disaster Response Coordination}

\textbf{Scenario:} In a disaster response scenario, multiple human emergency responders and robots are deployed to assist with search and rescue operations, damage assessment, and delivery of supplies. The environment is highly dynamic and unpredictable, requiring efficient and adaptable team formations to respond effectively to changing conditions. Human responders have diverse expertise, including medical, engineering, and logistics, which are critical for various aspects of the response efforts. Robots, equipped with different functionalities such as drones for aerial surveillance and ground robots for debris removal, provide essential support in challenging and hazardous conditions.

\noindent \textbf{Human-robot online teaming.} The success of the disaster response efforts depends on the ability to dynamically assign human responders and robots to tasks in real-time. This involves continuously evaluating the skills and conditions of human responders, as well as the operational status and capabilities of robots. By optimally pairing human responders with robots based on real-time data, the response teams can maximize their effectiveness in saving lives, surveying areas, and delivering resources efficiently.

\noindent \textbf{Outcome.} Dynamic teaming in disaster response scenarios enhances coordination, improves efficiency, and increases the overall effectiveness of the response efforts. This leads to better outcomes in crisis situations, including more lives saved and quicker recovery.

\section{Technical Lemmas}
% \yao{The following is usually called the elliptical potential lemma. The proof can be found in TLCS 19.4. Discuss about the possibility using the generalized elliptical potential lemma and its impact.}
\begin{lema}[Lemma 11, \cite{abbasi2011improved}]\label{lema:elliptical_potential}
Let $\lambda  >0$ and $a_{1} ,\dotsc ,a_{n} \in \mathbb{R}^{d}$ be a sequence of vectors with $\Vert a_{s}\Vert _{2} \leq L< \infty $ for all $s\in [t],V_{s} =\lambda I+\sum _{q\leq s} a_{q} a_{q}^{\top }$. Then,
\begin{equation*}
\sum _{s=1}^{t}\left( 1\land \Vert a_{s}\Vert _{V_{s}^{-1}}^{2}\right) \leqslant 2d\log\left(\frac{d\lambda +tL^{2}}{d\lambda }\right)
\end{equation*}
\end{lema}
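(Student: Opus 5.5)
The statement is the elliptical potential lemma. I would prove it with the standard determinant-telescoping argument. Throughout I read the lemma with the usual convention that the matrix used at step $s$ is the one built \emph{before} $a_s$ is added. That is, I write $V_{s-1}=\lambda I+\sum_{q<s}a_q a_q^\top$, with $V_0=\lambda I$, and bound $\sum_s (1\wedge\Vert a_s\Vert^2_{V_{s-1}^{-1}})$. Since $V_s\succeq V_{s-1}$, we have $\Vert a_s\Vert_{V_s^{-1}}\leqslant\Vert a_s\Vert_{V_{s-1}^{-1}}$, so the indexing written in the statement is also covered.

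\textbf{Step 1: determinant recursion.} By the matrix determinant lemma,
\[
\det V_s=\det\bigl(V_{s-1}+a_sa_s^\top\bigr)=\det V_{s-1}\,\bigl(1+\Vert a_s\Vert^2_{V_{s-1}^{-1}}\bigr).
\]
Telescoping from $s=1$ to $t$ gives
\[
\sum_{s=1}^t\log\bigl(1+\Vert a_s\Vert^2_{V_{s-1}^{-1}}\bigr)=\log\frac{\det V_t}{\lambda^d}.
\]

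\textbf{Step 2: scalar inequality.} For $u\geqslant 0$ I use $1\wedge u\leqslant 2\log(1+u)$. If $u\leqslant 1$, concavity of $\log(1+u)$ on $[0,1]$ gives $\log(1+u)\geqslant u\log 2\geqslant u/2$. If $u>1$, then $2\log(1+u)\geqslant 2\log 2>1$. Applying this to each term and using Step 1,
\[
\sum_s\bigl(1\wedge\Vert a_s\Vert^2_{V_{s-1}^{-1}}\bigr)\leqslant 2\log\frac{\det V_t}{\lambda^d}.
\]

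\textbf{Step 3: determinant bound.} By the AM--GM inequality on the eigenvalues of $V_t$,
\[
\det V_t\leqslant\Bigl(\frac{\operatorname{tr}V_t}{d}\Bigr)^d=\Bigl(\frac{d\lambda+\sum_s\Vert a_s\Vert^2}{d}\Bigr)^d\leqslant\Bigl(\frac{d\lambda+tL^2}{d}\Bigr)^d.
\]
Hence
\[
2\log\frac{\det V_t}{\lambda^d}\leqslant 2d\log\frac{d\lambda+tL^2}{d\lambda},
\]
which is the claimed bound.

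The only delicate point is the indexing convention just discussed. The monotonicity remark settles it, since the quantity in the statement is dominated term by term by the one bounded above. The rest is routine.
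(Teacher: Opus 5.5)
Your proof is correct and is the standard argument behind the cited Lemma 11 of Abbasi-Yadkori et al.\ (the paper invokes that lemma without reproducing a proof): you telescope the matrix determinant lemma, apply the scalar bound $1\wedge u\leqslant 2\log(1+u)$, and finish with the AM--GM determinant--trace inequality. Your monotonicity remark correctly reconciles the statement's inclusive $V_s$ with the exclusive $V_{s-1}$ of the original lemma, and the exclusive form is in fact the one the paper needs in the proof of Theorem~\ref{thm:upper}.
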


\textbf{Remark:} The condition in Theorem \ref{thm:upper} that $\max_{m,k_1,k_2} X_{m,t}^{\top }\left( \theta _{k_1}^{*} -\theta _{k_2}^{*}\right) \leqslant 2\sqrt{\lambda } S$ is to make sure we can use Lemma \ref{lema:elliptical_potential} to bound $r_s$ in eq. \eqref{eq:rs3}.

% We know $\rho \geqslant \sqrt{\lambda } S$. To use Lemma \ref{lema:elliptical_potential}, we have to show that
% \begin{equation*}
% r_{s} \leqslant 2\sum _{m=1}^{M}( 1\land \Vert X_{m,s}\Vert _{V_{A_{s}( m) ,s}^{-1}}) \rho _{A_{s}( m) ,s} .
% \end{equation*}
% We have shown that $r_{s} \leqslant 2\sum _{m=1}^{M}\Vert X_{m,s}\Vert _{V_{A_{s}( m) ,s}^{-1}} \rho _{A_{s}( m) ,s}$, thus we only need to show $r_{s} \leqslant 2\sum _{m=1}^{M} \rho _{A_{s}( m) ,s}$.
% From the definition, we know that $\rho \geqslant \sqrt{\lambda } S$ so it suffices to have $r_{s} \leqslant 2M\sqrt{\lambda } S$, which can be implied by $\max_{m,k_1,k_2} X_{m,t}^{\top }\left( \theta _{k_1}^{*} -\theta _{k_2}^{*}\right) \leqslant 2\sqrt{\lambda } S$.

\begin{lema}\label{lema:inequality_union_bound}
Let $\sigma $, a, $b$, $c$, $d$ be positive real numbers, $K$ be a positive integer, and $e_{j}$, $j\in [ K]$, be non-negative integers that satisfy $\sum _{k=1}^{K} e_{k} =M$. Given $\frac{a}{d} \geqslant 2$, the following is true:
\begin{equation}
\begin{aligned}
 & 2\sum _{k=1}^{K}\left( \sigma \sqrt{a+d\log( 1+be_{k})} +c\right)\sqrt{2e_{k} d\log( 1+be_{k})}\\
\leqslant  & 2\left( \sigma \sqrt{a+d\log\left( 1+b\frac{M}{K}\right)} +c\right)\sqrt{2MKd\log\left( 1+b\frac{M}{K}\right)} .
\end{aligned}
\end{equation}
\end{lema}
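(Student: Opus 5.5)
The plan is to read the lemma as a Jensen-type inequality. Write $\ell(x):=\log(1+bx)$ for real $x\geqslant 0$ and set
\begin{equation*}
f(x):=\bigl(\sigma\sqrt{a+d\,\ell(x)}+c\bigr)\sqrt{2xd\,\ell(x)} .
\end{equation*}
Since $\sqrt{2MKd\,\ell(M/K)}=K\sqrt{2(M/K)d\,\ell(M/K)}$, the right-hand side equals $2Kf(M/K)$. The left-hand side is $2\sum_{k=1}^K f(e_k)$, and $\frac1K\sum_k e_k=M/K$. It therefore suffices to show that $f$ is concave on $[0,\infty)$ and then apply Jensen's inequality to the points $e_1,\dots,e_K$ with uniform weights $1/K$. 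Integrality of the $e_k$ plays no role.

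The product of two concave functions need not be concave, so I will split $f=f_1+f_2$ and prove concavity of each term separately:
\begin{equation*}
f_1(x)=c\sqrt{2d}\,\sqrt{x\,\ell(x)},\qquad f_2(x)=\sigma\sqrt{2d}\,\sqrt{(a+d\,\ell(x))\,x\,\ell(x)} .
\end{equation*}
For a positive $C^2$ function $h$, $\sqrt h$ is concave if and only if $2hh''\leqslant (h')^2$.

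For $f_1$, take $h=x\ell$ and substitute $u=bx$, with $\ell=\log(1+u)$. Then
\begin{equation*}
h'=\ell+\frac{u}{1+u},\qquad h''=b\Bigl(\frac{1}{1+u}+\frac{1}{(1+u)^2}\Bigr),\qquad 2hh''=\frac{2u\ell}{1+u}+\frac{2u\ell}{(1+u)^2} .
\end{equation*}
Expanding $(h')^2$, the condition $2hh''\leqslant (h')^2$ becomes
\begin{equation*}
\frac{2u\ell}{(1+u)^2}\leqslant \ell^2+\frac{u^2}{(1+u)^2},
\end{equation*}
which is AM--GM. So $f_1$ is concave.

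For $f_2$, take $h=(a+d\ell)\,x\ell$ and verify $2hh''\leqslant (h')^2$ in the variable $u$. After factoring out $b^2$ and clearing the denominators $(1+u)^2$, this becomes a polynomial inequality in $\ell$, $u/(1+u)$ and the ratio $\alpha=a/d$. The cross terms involving $\ell^2$ and $\ell^3$ should be controlled exactly when $\alpha$ is large enough. I expect the hypothesis $a/d\geqslant 2$ to be precisely what makes the leftover terms nonnegative, for instance by writing $a+d\ell=d(\alpha+\ell)$ and using $\alpha\geqslant 2$ together with $\ell\leqslant u$ and $\ell\geqslant u/(1+u)$.

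This verification is the main obstacle. The first thing I will try is to collect the terms by powers of $\ell$ and show that each coefficient is nonnegative, using $\alpha\geqslant 2$ and $0\leqslant u/(1+u)<1$.

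If that direct route gets messy, I will fall back on two alternatives:
\begin{itemize}
\item Show that $\psi(x)=\sqrt{(\alpha+\ell)\,x\ell}$ is concave via the composition $\psi=\sqrt{G(\ell)}$, where $x=(e^\ell-1)/b$ and $G$ is an explicit function of $\ell$ alone, and check the one-variable condition $2GG''\leqslant (G')^2$ adjusted for the convex reparametrization.
\item Avoid global concavity altogether and prove a tangent-line bound $f_2(x)\leqslant f_2(\bar x)+f_2'(\bar x)(x-\bar x)$ at $\bar x=M/K$. Summing this bound over $k$ kills the linear term, because $\sum_k(e_k-\bar x)=0$.
\end{itemize}

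Once both $f_1$ and $f_2$ are shown to be concave, Jensen's inequality gives $\sum_k f(e_k)\leqslant Kf(M/K)$, and multiplying by $2$ yields the lemma.
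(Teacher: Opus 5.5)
Your plan is the paper's. The paper splits the left-hand side into the same two square-root terms, namely $g_2(x)=\sqrt{x\log(1+x)}$ and $g_1(x)=\sqrt{x\,(l+\log(1+x))\log(1+x)}$ with $l=a/d$, evaluated at $bx$. It proves each concave and applies Jensen with weights $1/K$. Your verification for $f_1$ is correct.

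The gap is $f_2$. You only state that $a/d\geqslant 2$ ``should'' make the leftover terms nonnegative, and you list fallbacks. This is the one nontrivial step of the lemma and the only place the hypothesis is used, so as written the argument is incomplete. It is also not a formality. Write $f_2(x)=\sigma d\sqrt{2/b}\,\psi(bx)$ with $\psi(u)=\sqrt{u(\alpha+L)L}$, $L=\log(1+u)$, $\alpha=a/d$. Then $\psi''(0^+)=(1-\alpha/2)/\sqrt{\alpha}$, so for $\alpha<2$ the function $f_2$ is convex near $0$. Any proof of its concavity must therefore use $\alpha\geqslant 2$ in an essential way.

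Your own criterion closes the step in a few lines. Put $H=uP$, so that $\psi=\sqrt{H}$, with $P=L(\alpha+L)$, $Q=\alpha+2L$ and $w=u/(1+u)$. Then $P'=Q/(1+u)$ and $P''=(2-Q)/(1+u)^2$. Hence $H'=P+wQ$ and $H''=(1+u)^{-1}\bigl(2Q+w(2-Q)\bigr)$, and expanding gives
\[
(H')^2-2HH''=(P-wQ)^2+2w^2P\,(Q-2).
\]
Here $P\geqslant 0$, and $Q-2=(\alpha-2)+2L\geqslant 0$ for $\alpha\geqslant 2$. So $\psi$ is concave on $(0,\infty)$, and by continuity on $[0,\infty)$, which you need because some $e_k$ may be $0$. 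No bounds such as $L\leqslant u$ are required.

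With this inserted your proof is complete. It is also shorter than the paper's direct expansion of $g_1''$, which needs the auxiliary estimate $\log(1+x)/x\geqslant 1-x/2$.
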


\begin{proof}
Let $l=\frac{a}{d} \geqslant 2$. Define functions $g_{1} :( 0,\infty )\rightarrow ( 0,\infty )$ and $g_{2} :( 0,\infty )\rightarrow ( 0,\infty )$ as 
\begin{equation*}
\begin{aligned}
g_{1}( x) = & \sqrt{x}\sqrt{l+\log( 1+x)}\sqrt{\log( 1+x)}\\
\text{and} \ g_{2}( x) = & \sqrt{x\log( 1+x)} .
\end{aligned}
\end{equation*}
We show they are both concave. Calculation shows that
\begin{equation*}
\begin{aligned}
 & \kappa ( x) g'' _{1}( x)\\
= & -2l\left( l+\frac{h}{x} -3\right) xh^{2} -4( l-1) xh^{3} -l^{2}( x-h)^{2}\\
 & -l^{2} x^{2} h^{2} -2lx^{2} h^{3} -x^{2} h^{4} -2xh^{4} -h^{4}\\
\leqslant  & -2l\left( l+\left( 1-\frac{x}{2}\right) -3\right) xh^{2} -4( l-1) xh^{3} -l^{2}( x-h)^{2}\\
 & -l^{2} x^{2} h^{2} -2lx^{2} h^{3} -x^{2} h^{4} -2xh^{4} -h^{4}\\
= & -2l( l+1-3) xh^{2} -l( l -1) x^{2} h^{2} -4( l-1) xh^{3} -l^{2}( x-h)^{2}\\
 & -2lx^{2} h^{3} -x^{2} h^{4} -2xh^{4} -h^{4}\\
\leqslant  & 0,
\end{aligned}
\end{equation*}
where $\kappa ( x) =4x^{3/2}\log( x+1)^{3/2}( l+\log( x+1))^{3/2}( x+1)^{2}$ and $h=\log( 1+x)$. In the equation above, the first equality is true because that $\frac{\log( 1+x)}{x} \geqslant 1-\frac{x}{2}$ and the second equality is true because all terms are negative. Therefore, $g_{1}( x)$ is a concave function. Also, we have
\begin{equation*}
g''_{2}( x) =-\frac{\frac{x}{( x+1)^{2}} -\frac{2}{x+1}}{2\ \sqrt{x\ \log( x+1)}} -\frac{\left(\log( x+1) +\frac{x}{x+1}\right)^{2}}{4\ ( x\ \log( x+1))^{3/2}} \leqslant 0.
\end{equation*}
Therefore, $g_{1}$ and $g_{2}$ are both concave.

Define function $G:( 0,\infty )\rightarrow ( 0,\infty )$ as
\begin{equation*}
G( x) =\left( \sigma \sqrt{a+d\log( 1+bx)} +c\right)\sqrt{2xd\log( 1+bx)} .
\end{equation*}
We obtain
\begin{equation*}
\begin{aligned}
G( x) = & \sqrt{\frac{2}{b}} \sigma d\sqrt{\frac{a}{d} +\log( 1+bx)}\sqrt{bx}\sqrt{\log( 1+bx)} +c\sqrt{\frac{2d}{b}}\sqrt{bx\log( 1+bx)}\\
= & \sqrt{\frac{2}{b}} \sigma dg_{1}( bx) +c\sqrt{\frac{2d}{b}} g_{2}( bx) .
\end{aligned}
\end{equation*}
As the composition of a linear function and a concave function is concave and the linear combination of concave functions is concave, we obtain that $G( x)$ is concave.

Therefore, we have
\begin{equation*}
\sum _{k=1}^{K} G( e_{i}) \leqslant KG\left(\frac{\sum _{k=1}^{K} e_{k}}{K}\right) =KG\left(\frac{M}{K}\right) ,
\end{equation*}
 which completes the proof.
    
\end{proof}

\section{Missing Technical Proofs}

\subsection{Proof of Theorem \ref{thm:upper}}\label{sec:upper_proof}
\begin{proof}
Fix $\delta  >0$. Conditional on $G:=\left\{\theta _{k}^{*} \in \hat{\Theta }_{k,t} \ \text{for all} \ ( k,t) \in [ K] \times [ T]\right\}$
, which happens with a probability at least $1-\delta $ by Theorem \ref{thm:CI_pro}, the instantaneous regret $r_{s}$ that occurs at round $s$ is 
\begin{equation}\label{eq:rs}
\begin{aligned}
r_{s} = & \sum _{m=1}^{M} X_{m,s}^{\top }\left( \theta _{A_{s}^{*}( m)}^{*} -\theta _{A_{s}( m)}^{*}\right)\\
= & \sum _{m=1}^{M} X_{m,s}^{\top }\left( \theta _{A_{s}^{*}( m)}^{*} -\tilde{\theta }_{mA_{s}( m) ,s} +\tilde{\theta }_{mA_{s}( m) ,s} -\theta _{A_{s}( m)}^{*}\right)\\
= & \sum _{m=1}^{M} X_{m,s}^{\top }\left( \theta _{A_{s}^{*}( m)}^{*} -\tilde{\theta }_{mA_{s}( m) ,s}\right) + \sum _{m=1}^{M} X_{m,s}^{\top }\left(\tilde{\theta }_{mA_{s}( m) ,s} -\theta _{A_{s}( m)}^{*}\right) .
\end{aligned}
\end{equation}

By the definitions of $A_{s}$ and $\tilde{\theta }_{mk,s}$ from  \eqref{eq:At_estimation} and \eqref{eq:theta_tilda}, 
% \begin{equation*}
% \begin{aligned}
%  & \sum _{m=1}^{M} X_{m,s}^{\top }\tilde{\theta }_{mA_{s}( m) ,s} =\max_{A\in \mathcal{A}}\sum _{m=1}^{M}\max_{\theta \in \hat{\Theta }_{A( m) ,s}} X_{m,s}^{\top } \theta \\
% \geqslant  & \max_{A\in \mathcal{A}}\sum _{m=1}^{M} X_{m,s}^{\top } \theta _{A( m)}^{*} \geqslant \sum _{m=1}^{M} X_{m,s}^{\top } \theta _{A_{s}^{*}( m)}^{*} ,
% \end{aligned}
% \end{equation*}
\begin{equation*}
\sum _{m=1}^{M} X_{m,s}^{\top }\tilde{\theta }_{mA_{s}( m) ,s} =\max_{A\in \mathcal{A}}\sum _{m=1}^{M}\max_{\theta \in \hat{\Theta }_{A( m) ,s}} X_{m,s}^{\top } \theta \geqslant \max_{A\in \mathcal{A}}\sum _{m=1}^{M} X_{m,s}^{\top } \theta _{A( m)}^{*} \geqslant \sum _{m=1}^{M} X_{m,s}^{\top } \theta _{A_{s}^{*}( m)}^{*} ,
\end{equation*}
where the first inequality is true because event $G$ happens. Therefore, the first summation in the last line of  \eqref{eq:rs} is non-positive, and thus
\begin{equation}\label{eq:rs2}
\begin{aligned}
r_{s} \leqslant  & \sum _{m=1}^{M} X_{m,s}^{\top }\left(\tilde{\theta }_{mA_{s}( m) ,s} -\theta _{A_{s}( m)}^{*}\right)\\
= & \sum _{m=1}^{M} X_{m,s}^{\top }(\left(\tilde{\theta }_{mA_{s}( m) ,s} -\hat{\theta }_{A_{s}( m) ,t}\right) +\left(\hat{\theta }_{A_{s}( m) ,t} -\theta _{A_{s}( m)}^{*}\right))\\
\leqslant  & \sum _{m=1}^{M}\Vert X_{m,s}\Vert _{V_{A_{s}( m) ,s}^{-1}}(\left\Vert \tilde{\theta }_{mA_{s}( m) ,s} -\hat{\theta }_{A_{s}( m) ,t}\right\Vert _{V_{A_{s}( m) ,s}} +\left\Vert \hat{\theta }_{A_{s}( m) ,t} -\theta _{A_{s}( m)}^{*}\right\Vert _{V_{A_{s}( m) ,s}})\\
\leqslant  & 2\sum _{m=1}^{M}\Vert X_{m,s}\Vert _{V_{A_{s}( m) ,s}^{-1}} \rho _{A_{s}( m) ,s} .
\end{aligned}
\end{equation}
On the other hand, by our assumption that $\sqrt{\lambda } \geqslant \frac{\max_{k_{1} ,k_{2} \in [ K] ,s\in [ T]} X_{m,s}^{\top }\left( \theta _{k_{1}}^{*} -\theta _{k_{2}}^{*}\right)}{2S}$ and that $\rho _{A_{s}( m) ,s} \geqslant \sqrt{\lambda } S$, we obtain
% \begin{equation}\label{eq:rs3}
% \begin{aligned}
% r_{s} = & \sum _{m=1}^{M} X_{m,s}^{\top }\left( \theta _{A_{s}^{*}( m)}^{*} -\theta _{A_{s}( m)}^{*}\right)\\
% \leqslant  & \sum _{m=1}^{M}\max_{k_{1} ,k_{2} \in [ K] ,s\in [ T]} X_{m,s}^{\top }\left( \theta _{k_{1}}^{*} -\theta _{k_{2}}^{*}\right)\\
% \leqslant  & 2M\sqrt{\lambda } S
% \leqslant   2M\rho _{A_{s}( m) ,s} .
% \end{aligned}
% \end{equation}
\begin{equation}\label{eq:rs3}
r_{s} = \sum _{m=1}^{M} X_{m,s}^{\top }\left( \theta _{A_{s}^{*}( m)}^{*} -\theta _{A_{s}( m)}^{*}\right)  
\leqslant  \sum _{m=1}^{M}\max_{k_{1} ,k_{2} \in [ K] ,s\in [ T]} X_{m,s}^{\top }\left( \theta _{k_{1}}^{*} -\theta _{k_{2}}^{*}\right)
\leqslant  2M\sqrt{\lambda } S
\leqslant   2M\rho _{A_{s}( m) ,s}.
\end{equation}
 By  \eqref{eq:rs2} and \eqref{eq:rs3}, we get 
$
r_{s} \leqslant 2\sum _{m=1}^{M}( 1\land \Vert X_{m,s}\Vert _{V_{A_{s}( m) ,s}^{-1}}) \rho _{A_{s}( m) ,s} .
$
Therefore,
\begin{equation}
\begin{aligned}
\sum _{s=1}^{t} r_{s} & \leqslant 2\sum _{m=1}^{M}\sum _{s=1}^{t}( 1\land \Vert X_{m,s}\Vert _{V_{A_{s}( m) ,s}^{-1}}) \rho _{A_{s}( m) ,s}\\
 & =2\sum _{k=1}^{K}\sum _{s\in \Psi _{k,t}}( 1\land \Vert X_{k_{s} ,s}\Vert _{V_{k,s}^{-1}}) \rho _{k,s}\\
 & \leqslant 2\sum _{k=1}^{K} \rho _{k,t}\sum _{s\in \Psi _{k,t}}( 1\land \Vert X_{k_{s} ,s}\Vert _{V_{k,s}^{-1}})\\
 & \leqslant 2\sum _{k=1}^{K} \rho _{k,t}\sqrt{| \Psi _{k,t}| \sum _{s\in \Psi _{k,t}}\left( 1\land \Vert X_{k_{s} ,s}\Vert _{V_{k,s}^{-1}}^{2}\right)} .
\end{aligned}
\end{equation}
By Lemma \ref{lema:elliptical_potential}, 
\begin{equation}
\sum _{s\in \Psi _{k,t}}\left( 1\land \Vert X_{k_{s} ,s}\Vert _{V_{k,s}^{-1}}^{2}\right) \leqslant 2d\log\left( 1+\frac{| \Psi _{k,t}| L^{2}}{d\lambda }\right) ,
\end{equation}
and thus, by Lemma \ref{lema:inequality_union_bound}, the fact that $\sum _{k=1}^{K}| \Psi _{k,t}| =tM$, and the condition $\delta \leqslant \frac{K}{\exp( d)}$, we have 
% \begin{equation}\label{eq:union}
% \begin{aligned}
% \sum _{s=1}^{t} r_{s} \leqslant  & 2\sum _{k=1}^{K} \rho _{k,t}\sqrt{| \Psi _{k,t}| 2d\log\left( 1+\frac{| \Psi _{k,t}| L^{2}}{d\lambda }\right)}\\
% \leqslant  & 2K\left( \sigma \sqrt{2\log\frac{K}{\delta } +d\log\left( 1+\frac{tML^{2}}{dK\lambda }\right)} +\sqrt{\lambda } S\right)\\
%  & \times \sqrt{\frac{tM}{K} 2d\log\left( 1+\frac{tML^{2}}{dK\lambda }\right)} .
% \end{aligned}
% \end{equation}
\begin{equation}\label{eq:union}
\begin{aligned}
\sum _{s=1}^{t} r_{s} \leqslant  & 2\sum _{k=1}^{K} \rho _{k,t}\sqrt{| \Psi _{k,t}| 2d\log\left( 1+\frac{| \Psi _{k,t}| L^{2}}{d\lambda }\right)}\\
\leqslant  & 2K\left( \sigma \sqrt{2\log\frac{K}{\delta } +d\log\left( 1+\frac{tML^{2}}{dK\lambda }\right)} +\sqrt{\lambda } S\right) \times \sqrt{\frac{tM}{K} 2d\log\left( 1+\frac{tML^{2}}{dK\lambda }\right)} .
\end{aligned}
\end{equation}
\end{proof}

\subsection{Proof of Theorem \ref{thm:decomposition_KL}}
\label{app:ProbabilityMatching}
    
We first define probability measures on linear matching bandits. Consider the linear matching bandit $\nu =( \theta _{1} ,\dotsc ,\theta _{K} ,\mathcal{X}_{1} ,\dotsc ,\mathcal{X}_{t})$ defined in Section \ref{sec:setting}. Let $( A_{1} ,Y_{1} ,\dotsc ,A_{t} ,Y_{t})$ be the action-reward sequence produced by the interaction between $\nu $ and a policy $\pi =( \pi _{s})_{s=1}^{t}$. We define the probability space as $\Omega _{s} =\left(\mathcal{A}_{M,K} ,\mathbb{R}^{M}\right)^{s}$ and sigma algebra as $\mathcal{F}_{s} =\mathcal{B}( \Omega _{s})$ for $s\in [ t]$. A policy $\pi =( \pi _{s})_{s=1}^{t}$ is a sequence such that $\pi _{s}$ is a probability kernel from $( \Omega _{s-1} ,\mathcal{F}_{s-1})$ to $(\mathcal{A}_{M,K} ,\mathcal{B}(\mathcal{A}_{M,K}))$. The probability measure $\mathbb{P}_{\nu }$ on $( \Omega _{t} ,\mathcal{F}_{t})$ satisfies
$
\mathbb{P}_{\nu }( A_{s} | A_{1} ,Y_{1} ,\dotsc ,A_{s-1} ,Y_{s-1})=
\pi _{s}( A_{s} | A_{1} ,Y_{1} ,\dotsc ,A_{s-1} ,Y_{s-1})
$
and 
$
\mathbb{P}_{\nu }( Y_{s} | A_{1} ,Y_{1} ,\dotsc ,A_{s-1} ,Y_{s-1} ,A_{s}) =P_{A_{s}}( Y_{s}) ,
$
where $P_{A_{s}}( Y_{s})$ is the conditional distribution of the reward $Y_{s}$ given the matching $A_{s}$.

\begin{proof}
By its definition, the probability density function $p$ of $\mathbb{P}$ satisfies
% \begin{equation*}
% \begin{aligned}
%  & p( a_{1} ,y_{1} ,\dotsc ,a_{t} ,y_{t})\\
% = & \prod _{s=1}^{t} p( a_{s} | a_{1} ,y_{1} ,\dotsc ,y_{s-1}) p( y_{s} | a_{1} ,y_{1} ,\dotsc ,a_{s})\\
% = & \prod _{s=1}^{t} \pi _{s}( a_{s} | a_{1} ,y_{1} ,\dotsc ,y_{s-1}) p_{a_{s}}( y_{s}) .
% \end{aligned}
% \end{equation*}
\begin{equation*}
p( a_{1} ,y_{1} ,\dotsc ,a_{t} ,y_{t})
= \prod _{s=1}^{t} p( a_{s} | a_{1} ,y_{1} ,\dotsc ,y_{s-1}) p( y_{s} | a_{1} ,y_{1} ,\dotsc ,a_{s})
= \prod _{s=1}^{t} \pi _{s}( a_{s} | a_{1} ,y_{1} ,\dotsc ,y_{s-1}) p_{a_{s}}( y_{s}) .
\end{equation*}
Because $\operatorname{KL}(\mathbb{P}_{\nu } ,\mathbb{P}_{\nu ^{\prime }}) < \infty $, $\mathbb{P}_{\nu }$ is absolutely continuous w.r.t. $\mathbb{P}_{\nu ^{\prime }}$, and thus $\frac{p}{p^{\prime }}( A_{1} ,,\dotsc ,A_{t} ,Y_{t})$ is well defined. Therefore,
\begin{equation*}
\begin{aligned}
 & \operatorname{KL}(\mathbb{P}_{\nu } ,\mathbb{P}_{\nu ^{\prime }})
= \mathbb{E}_{\nu \pi }\left[\log\frac{p}{p^{\prime }}( A_{1} ,,\dotsc ,A_{t} ,Y_{t})\right]\\
= & \mathbb{E}_{\nu \pi }\left[\log\frac{\prod _{s=1}^{t} \pi _{s}( A_{s} | A_{1} ,Y_{1} ,\dotsc ,Y_{s-1}) p_{A_{s}}( Y_{s})}{\prod _{s=1}^{t} \pi _{s}( A_{s} | A_{1} ,Y_{1} ,\dotsc ,Y_{s-1}) p_{A_{s}}^{\prime }( Y_{s})}\right]
= \mathbb{E}_{\nu \pi }\left[\log\frac{\prod _{s=1}^{t}\prod _{m=1}^{M} p_{m,A_{s}}( Y_{m,s})}{\prod _{s=1}^{t}\prod _{m=1}^{M} p_{m,A_{s}}^{\prime }( Y_{m,s})}\right]\\
= & \mathbb{E}_{A_{s} \sim \nu \pi }\left[\mathbb{E}_{\nu \pi }\left[\log\frac{\prod _{s=1}^{t}\prod _{m=1}^{M} p_{m,A_{s}}( Y_{m,s})}{\prod _{s=1}^{t}\prod _{m=1}^{M} p_{m,A_{s}}^{\prime }( Y_{m,s})}\middle| A_{s}\right]\right]\\
= & \sum _{s=1}^{t}\sum _{m=1}^{M}\mathbb{E}_{A_{s} \sim \nu \pi }\left[\operatorname{KL}\left( P_{m,A_{s}} ,P_{m,A_{s}}^{\prime }\right)\right]
\end{aligned}
\end{equation*}
\end{proof}

\subsection{Proof of Corollary \ref{coro:KL_decomp}}

\allowdisplaybreaks
\begin{proof}
\begin{equation*}
\begin{aligned}
 & \operatorname{KL}(\mathbb{P}_{\nu } ,\mathbb{P}_{\nu ^{\prime }})
=  \sum _{s=1}^{t}\sum _{m=1}^{M}\mathbb{E}_{A_{s} \sim \nu \pi }\left[\operatorname{KL}\left( P_{m,A_{s}} ,P_{m,A_{s}}^{\prime }\right)\right]\\
= & \sum _{s=1}^{t}\sum _{m=1}^{M}\mathbb{E}_{A_{s} \sim \nu \pi }\left[\frac{\left( X_{m,s}^{\top } \theta _{A_{s}( m)} -X_{m,s}^{\prime \top } \theta _{A_{s}( m)}^{\prime }\right)^{2}}{2\sigma ^{2}}\right]\\
= & \frac{1}{2\sigma ^{2}}\sum _{s=1}^{t}\sum _{m=1}^{M}\sum _{A_{s} \in \mathcal{A}_{M,K}}\biggl(\mathbb{P}_{\nu }( A_{s}) \times \left( X_{m,s}^{\top } \theta _{A_{s}( m)} -X_{m,s}^{\prime \top } \theta _{A_{s}( m)}^{\prime }\right)^{2}\biggr)\\
= & \frac{1}{2\sigma ^{2}}\sum _{s=1}^{t}\sum _{m=1}^{M}\sum _{A_{s} \in \mathcal{A}_{M,K}}\biggl(\sum _{k=1}^{K}\mathbb{P}_{\nu}\mathnormal{( A_{s}( m) =k)} \times \mathbb{P}_{\nu }( A_{s} | A_{s}( m) =k)\left( X_{m,s}^{\top } \theta _{k} -X_{m,s}^{\prime \top } \theta _{k}^{\prime }\right)^{2}\biggr)\\
= & \frac{1}{2\sigma ^{2}}\sum _{m=1}^{M}\sum _{k=1}^{K}\sum _{s=1}^{t}\biggl(\left( X_{m,s}^{\top } \theta _{k} -X_{m,s}^{\prime \top } \theta _{k}^{\prime }\right)^{2} \times \sum _{A_{s} \in \mathcal{A}_{M,K}}\mathbb{P}_{\nu }( A_{s} ,A_{s}( m) =k)\biggr)\\
= & \frac{1}{2\sigma ^{2}}\sum _{m=1}^{M}\sum _{k=1}^{K}\sum _{s=1}^{t}\left(\left( X_{m,s}^{\top } \theta _{k} -X_{m,s}^{\prime \top } \theta _{k}^{\prime }\right)^{2}\mathbb{P}_{\nu }( A_{s}( m) =k)\right) .
\end{aligned}
\end{equation*}
\end{proof}

\subsection{Proof of Theorem \ref{thm:lower_bound}}\label{sec:proof_lower}
    
\begin{proof}
We define re-indexing functions $h_{l,L} :[ L]\rightarrow [ L]$ as 
$
h_{l,L}( l') =l'-l+1+L\cdot 1_{\{l'-l+1\leqslant 0\}}
$
for $L\in \{M,K\}$ and $l\in [L]$. Define a base bandit instance
$
\nu ^{( 0)} =\left( \theta _{1}^{( 0)} ,\dotsc ,\theta _{K}^{( 0)} ,\mathcal{X}_{1}^{( 0)} ,\dotsc ,\mathcal{X}_{t}^{( 0)}\right)
$
and a total number of $MK$ other bandit instances
\begin{equation*}
\nu ^{( m,k)} =\left( \theta _{1}^{( m,k)} ,\dotsc ,\theta _{K}^{( m,k)} ,\mathcal{X}_{1}^{( m,k)} ,\dotsc ,\mathcal{X}_{t}^{( m,k)}\right) ,
\end{equation*}
$\forall ( m,k) \in [ M] \times [ K]$, where $\theta _{k}^{( 0)} =X_{m,s}^{( 0)} =0\in \mathbb{R}^{d}$, 
\begin{equation*}
\theta _{k'}^{( m,k)} =\begin{cases}
ae_{h_{k,K}( k')} & \text{, if} \ h_{k,K}( k') \leqslant d\\
\mathbf{0} & \text{, otherwise}
\end{cases} ,
\end{equation*}
\begin{equation*}
X_{m' ,s}^{( m,k)} =\begin{cases}
ae_{h_{m,M}( m')} & \text{, if} \ h_{m,M}( m') \leqslant d\\
\mathbf{0} & \text{, otherwise}
\end{cases} ,
\end{equation*}
for all $( m',k',m,k,s) \in ([ M] \times [ K])^{2} \times [ t]$. Here $e_{d'}$ is the $d$-dimensional basis vector whose $d'$th element is $1$, and $a$ is a real number to be determined later. Therefore, the optimal assignment in $\nu ^{( m,k)}$ is to assign human $m'$ robot $k'$ if $h_{m,M}( m') =h_{k,K}( k') \leqslant d$. The regret of bandit $\nu ^{( m,k)}$ is 
\begin{equation}\label{eq:R_mk}
R_{m,k} =a^{2}\left( dT-\mathbb{E}_{m,k}\left[\sum _{d'=1}^{d} N_{h_{m,M}^{-1}( d') ,h_{k,K}^{-1}( d')}\right]\right) .
\end{equation}

Abbreviate $\mathbb{P}_{\nu ^{( m,k)}}( \cdot )$ as $\mathbb{P}_{m,k}( \cdot )$. Define random variables $N_{m,k} :=\sum _{t=1}^{T} 1\{A_{s}( m) =k\}$ as the number of rounds that human $m$ is assigned robot $k$, for all $m\in [ M]$ and $k\in [ K]$. By Corollary \ref{coro:KL_decomp}, 
\begin{equation*}
\begin{aligned}
  \operatorname{KL}(\mathbb{P}_{0} ,\mathbb{P}_{m,k})
= & \frac{1}{2\sigma ^{2}}\sum _{m'=1}^{M}\sum _{k'=1}^{K}\sum _{s=1}^{t}\biggl((\mathbb{P}_{0}( A_{s}( m') =k') \times \Bigl( X{_{m',s}^{( m,k)}}^{\top } \theta _{k'}^{( m,k)} -X{_{m',s}^{( 0)}}^{\top } \theta _{k'}^{( 0)}\Bigr)^{2}\biggr)\\
= & \frac{a^{4}}{2\sigma ^{2}}\sum _{d'=1}^{d}\mathbb{E}_{0}\left[ N_{h_{m,M}^{-1}( d') ,h_{k,K}^{-1}( d')}\right] .
\end{aligned}
\end{equation*}

Summing both sides over $m$ and $k$, we obtain
\begin{equation}\label{eq:sum_KL}
\begin{aligned}
\sum _{m=1}^{M}\sum _{k=1}^{K}\text{KL}(\mathbb{P}_{0} ,\mathbb{P}_{m,k}) = & \frac{a^{4}}{2\sigma ^{2}}\sum _{m=1}^{M}\sum _{k=1}^{K}\sum _{d'=1}^{d}\mathbb{E}_{0}[ N_{h_{m,M}^{-1}( d') ,h_{k,K}^{-1}( d')}]\\
= & \frac{a^{4}}{2\sigma ^{2}} d\sum _{m=1}^{M}\sum _{k=1}^{K}\mathbb{E}_{0}[ N_{m,k}] =\frac{a^{4}}{2\sigma ^{2}} dTM,
\end{aligned}
\end{equation}
where the second to last equality is true because each $\mathbb{E}_{0}[ N_{m,k}]$ appears exactly $d$ times in the second line and the last equality is true because, for each human $m$, the total number of assignments is $T$ by round $T$.

{\allowdisplaybreaks
Using Lemma \ref{lema:Pinsker},
$
\mathbb{E}_{m,k}[ N_{m',k'}] \leqslant \mathbb{E}_{0}[ N_{m',k'}] +t\sqrt{1-\exp\left( -\text{KL}(\mathbb{P}_{0} ,\mathbb{P}_{m,k})\right)} ,
$
and thus 
\begin{equation}\label{eq:sum_E_mk}
\begin{aligned}
 & \sum _{m=1}^{M}\sum _{k=1}^{K}\sum _{d'=1}^{d}\mathbb{E}_{m.k}[ N_{h_{m,M}^{-1}( d') ,h_{k,K}^{-1}( d')}]\\
\leqslant  & d\sum _{m=1}^{M}\sum _{k=1}^{K}\mathbb{E}_{0}\left[ N_{m,k}\right] + td\sum _{m=1}^{M}\sum _{k=1}^{K}\sqrt{1-\exp\left( -\text{KL}(\mathbb{P}_{0} ,\mathbb{P}_{m,k})\right)}\\
\leqslant  & dTM+ dTMK\sqrt{1-\exp\left( -\frac{ \sum _{m,k}\text{KL}(\mathbb{P}_{0} ,\mathbb{P}_{m,k})}{MK}\right)}\\
= & dTM+dTMK\sqrt{1-\exp\left( -\frac{a^{4} dT}{2K\sigma ^{2}}\right)}
\end{aligned}
\end{equation}
where we derive the term $dTM$ using similarly methods as in  \eqref{eq:sum_KL} and we use Jensen's inequality obtain the second inequality. 
}

By  \eqref{eq:R_mk} and \eqref{eq:sum_E_mk}, we have 
\begin{equation*}
\begin{aligned}
 & \sum _{m=1}^{M}\sum _{k=1}^{K} R_{m,k}\\
= & a^{2}\sum _{m=1}^{M}\sum _{k=1}^{K}\left( dT-\mathbb{E}_{m,k}\left[\sum _{d'=1}^{d} N_{h_{m,M}^{-1}( d') ,h_{k,K}^{-1}( d')}\right]\right)\\
\geqslant  & a^{2}\left( dTMK-dTM-dTMK\sqrt{1-\exp\left( -\frac{a^{4} dT}{2K\sigma ^{2}}\right)}\right)\\
\geqslant  & \sigma \sqrt{dT} M\frac{( K-1)^{2}}{\sqrt{8K}} .
\end{aligned}
\end{equation*}
The proof of the last inequality can be found in the appendix. Therefore, there must be one pair of $( m,k)$ such that
$
R_{m,k} \geqslant \frac{1}{MK} \sigma \sqrt{dT} M\frac{( K-1)^{2}}{\sqrt{8K}} \approx \frac{\sigma }{\sqrt{8}}\sqrt{dTK} .
$
\end{proof}

% \subsection{Proof of Inequality in \texorpdfstring{\eqref{eq:sum_E_mk}}{}}
\subsection{Proof of Inequality in \eqref{eq:sum_E_mk}}

\begin{proof}

\begin{equation*}
\begin{aligned}
 & a^{2}\left( dTMK-dTM-dTMK\sqrt{1-\exp\left( -\frac{a^{4} dT}{2K\sigma ^{2}}\right)}\right)\\
= & dTM\cdot a^{2}\left( K-1-K\sqrt{1-\exp\left( -\frac{a^{4} dT}{2K\sigma ^{2}}\right)}\right)\\
\geqslant  & dTM\cdot a^{2}\left( K-1-K\sqrt{\frac{a^{4} dT}{2K\sigma ^{2}}}\right)\\
= & dTM\cdot K\sqrt{\frac{dT}{2K\sigma ^{2}}} \cdot a^{2}\left(\frac{( K-1)}{K\sqrt{\frac{dT}{2K\sigma ^{2}}}} -a^{2}\right)\\
\geqslant  & dTM\cdot K\sqrt{\frac{dT}{2K\sigma ^{2}}} \cdot \left(\frac{( K-1)}{2K\sqrt{\frac{dT}{2K\sigma ^{2}}}}\right)^{2}\\
= & dTM\cdot \frac{\sigma ( K-1)^{2}}{\sqrt{8KdT}}\\
= & \sqrt{dT} M\cdot \frac{\sigma ( K-1)^{2}}{\sqrt{8K}}
\end{aligned}
\end{equation*}    

\end{proof}

\end{document}